\documentclass[11pt]{article}

\usepackage[preprint]{acl}

\usepackage{times}
\usepackage{latexsym}

\usepackage[T1]{fontenc}

\usepackage[utf8]{inputenc}

\usepackage{microtype}

\usepackage{inconsolata}

\usepackage{bm}
\usepackage{amsmath, amssymb, amsfonts, amsthm}
\usepackage{graphicx}
\usepackage{booktabs}
\usepackage{multirow}
\usepackage{amsfonts}
\usepackage{color,xcolor,colortbl}
\usepackage{pifont}
\usepackage{adjustbox}
\usepackage{tabularx}
\usepackage{subcaption}
\usepackage{cleveref}
\usepackage{makecell}
\usepackage{enumitem}
\usepackage{mathtools}
\usepackage{hyperref}
\usepackage{paralist}
\usepackage{thmtools, thm-restate}
\usepackage{array}

\declaretheorem{lemma, proposition, corollary}[
    style=plain
]

\definecolor{oursgray}{gray}{0.95}
\definecolor{darkgreen}{RGB}{0,130,0}
\definecolor{darkred}{RGB}{160,0,0}
\newcommand{\mse}[2]{\ensuremath{#1{\scriptstyle\,\pm\,#2}}}
\newcommand{\bestmse}[2]{%
  \ensuremath{\mathbf{#1}{\scriptstyle\,\mathbf{\pm\,#2}}}}
\newcommand{\secondmse}[2]{\underline{\mse{#1}{#2}}}
\newcommand{\checkmk}{\textcolor{darkgreen}{\ding{51}}}
\newcommand{\cross}{\textcolor{darkred}{\ding{55}}}
\newcommand{\rev}[1]{\textcolor{black}{#1}}

\title{Truth in Contrast: Calibrated Factual Evidence from Language Model Likelihoods}

\author{
    Ponhvoan Srey\textsuperscript{1} \quad
    Jason Lucas\textsuperscript{2} \quad
    Pittawat Taveekitworachai\textsuperscript{1} \\
    \textbf{Junyoung Lee\textsuperscript{3}} \quad
    \textbf{Anh Tuan Luu\textsuperscript{1,4}\thanks{Corresponding Author}}\\
    \textsuperscript{1}Nanyang Technological University \quad
    \textsuperscript{2}University of Colorado at Boulder\\
    \textsuperscript{3}National University of Singapore\quad
    \textsuperscript{4}VinUniversity\\
    \texttt{\{ponhvoan002,anhtuan.luu\}@ntu.edu.sg}
}

\begin{document}
\maketitle
\begin{abstract}

Large Language Models (LLMs) encode substantial factual knowledge. Yet, estimating confidence in factual claims reliably remains challenging.
We study likelihood-based factual confidence from the complementary perspectives of discrimination and calibration across multilingual factual statements.
Specifically, we instantiate contrastive generative likelihoods as Contrastive Factual Evidence (CFE), which compares the likelihood of the same claim under competing factual and non-factual conditions, and derive its probabilistic counterpart, the Contrastive Factual Posterior (CFP).
We show analytically that CFP calibration depends on the differential error between the two surrogate conditionals, allowing errors shared across them to cancel and yield tighter calibration bounds.
Empirically, CFP substantially improves calibration while maintaining factual discrimination comparable to its strongest competitor.
\rev{Nonetheless, discrimination is uniformly weak across methods in lower-resource languages, which simultaneously undermines the utility of the measured calibration.}
To this end, we interpret a likelihood geometry, within which an evidence-bearing direction is induced, along with an orthogonal evidence-invariant direction.
In our investigation, we selectively align target-language claims with stronger English evidence only along the factual-evidence direction, improving discrimination and open-ended factual generation without target-language labels.

\end{abstract}

\section{Introduction}

Large Language Models (LLMs) acquire substantial factual knowledge during training, yet they can still produce incorrect information \citep{guan-etal-2024-language, jiang-etal-2024-large}.
As a result, a fundamental goal has been to expose this underlying truth-related information to enable LLMs to distinguish factual from non-factual claims.
Extensive work shows that such information can be recovered from model internal representations \citep{azaria-mitchell-2023-internal, marks2024the}.
A complementary, albeit arguably less explored, approach is to derive factuality signals from the model's own likelihoods, offering a training-free and probabilistically interpretable measure of factuality.
Likelihood-based \textbf{factual confidence} captures this objective by assigning a claim a continuous score that reflects an LM's confidence in its correctness \citep{mahaut-etal-2024-factual}.
Unlike a binary factuality decision, this confidence can express degrees of reliability and thereby support downstream mechanisms such as verification, filtering, abstention, or further review \citep{geng-etal-2024-survey}.

\begin{figure}
    \centering
    \includegraphics[width=\linewidth]{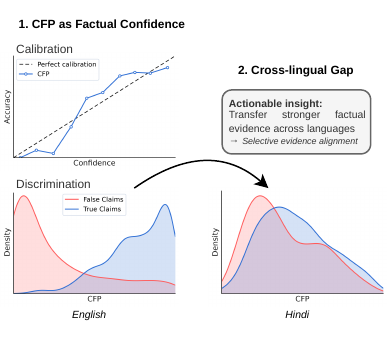}
    \caption{Overview.
    (1) We instantiate CFP as factual confidence and evaluate through complementary calibration and discrimination lenses (\S~\ref{sec:expt}). (2) Discrimination degrades in lower-resource languages \rev{despite low ECE}, motivating selective evidence transfer from stronger languages (\S~\ref{sec:fact_align}).}
    \label{fig:overview}
\end{figure}

\rev{
For factual confidence to be useful, it should be both \textbf{discriminative} and \textbf{calibrated}.
Discrimination reflects whether confidence can rank factual above non-factual claims, which is especially relevant to ranking- or selection-based uses such as filtering and selective prediction;
calibration instead determines whether the numerical confidence can be interpreted as a probability of correctness \citep{geng-etal-2024-survey,pmlr-v70-guo17a}.
These properties need not coincide.
Moreover, both should remain reliable under semantically equivalent variation, yet existing confidence estimators can degrade across languages and input forms \citep{mahaut-etal-2024-factual, xue-etal-2025-mlingconf}.}
This raises our central question: \emph{what signal can provide confidence that is both discriminative and calibrated while remaining robust to irrelevant linguistic variation?}

Common estimators either derive confidence from predictive uncertainty \citep{huang2025look}, which need not track factual correctness \citep{srey2026towards}, or directly elicit truth judgments such as P(True) \citep{kadavath2022languagemodelsmostlyknow}. %
While the latter more directly targets factual correctness, it relies on the model successfully expressing its knowledge through a prescribed truth-judgement instruction interface, whose behaviour can depend on prompting and post-training.
Neither perspective is explicitly designed to isolate factuality-specific evidence from unrelated variation.

We take a complementary \emph{generative view} grounded in Bayesian inference.
Rather than asking whether a claim is true, we ask how likely the same claim would be under factual versus non-factual generative conditions.
Their relative likelihood provides \textbf{Contrastive Factual Evidence (CFE)} and induces a probabilistic counterpart, \textbf{Contrastive Factual Posterior (CFP)}.
At the same time, because both likelihoods score the same claim, components of surrogate error shared across the two conditions can cancel, leaving the contrast dependent only on their differential error and providing a mechanism through which calibration can improve.
Then, we interpret the paired conditional likelihoods underlying CFE geometrically as a two-dimensional likelihood space, where one direction determines factual evidence while the orthogonal direction is evidence-invariant.
As a proof of concept, we exploit this decomposition to motivate an evidence-selective cross-lingual alignment objective designed to transfer stronger English factual evidence to target languages without requiring target-language factuality labels.
While the underlying generative likelihood contrast is well established, we study its implications specifically for factual confidence, calibration, and cross-lingual transfer.

\Cref{fig:overview} provides an overview of this work.
In summary, our main contributions are threefold:
\begin{inparaenum}[(i)]
\item We study contrastive generative likelihoods as factual confidence through CFE and CFP, \rev{jointly examining their calibration and discrimination across multilingual settings and exposing substantial cross-lingual variation between these two properties.} %
\item We characterise CFP calibration error, formalising our intuition on how calibration can be improved.
\item We derive an evidence-selective likelihood geometry and highlight the potential for cross-lingual factuality alignment.
\end{inparaenum}

\section{Related Work}

\paragraph{Generative Classification.}

Generative classifiers infer class labels via Bayes' rule from class-conditional likelihoods, rather than modelling posterior class probabilities directly \citep{ng2001discriminative}, and have well-established precedents in natural language processing. Early work includes utilising generative models for text classification \citep{mccallum1998comparison, nigam2000text} and readability assessment \citep{schwarm-ostendorf-2005-reading}.
This perspective has been revisited with pretrained LLMs, including zero-shot text classification using natural language label descriptions \citep{kumar2024gen}, few-shot classification with noisy-channel prompting \citep{min-etal-2022-noisy}, generative calibration for in-context learning \citep{jiang-etal-2023-generative}, and class-conditioned decoding \citep{krause-etal-2021-gedi-generative}.
Generally, these methods condition generation on a candidate attribute or class and compare how well the observed text is explained under each alternative.
\citet{bigelow2026belief} provide a Bayesian interpretation of LM behaviour, viewing likelihood changes as accumulated evidence about latent concepts. Building on this broader evidence perspective, \citet{hu-etal-2026-string} study a closely related principle of contrastive isolation.
Raw string probability reflects grammaticality, their desired attribute, alongside other factors, and controlled contrasts can better expose this attribute.
Our motivation is similar but we instantiate this contrast differently: we hold the claim fixed and compare its likelihood under factual and non-factual generative conditions to isolate evidence associated with factuality.
Moreover, where prior generative classification work primarily targets categorical prediction, we focus on the induced likelihood ratio as a continuous confidence signal.

\paragraph{Uncertainty and Confidence.}
A broad family of estimators derives confidence directly from the model's predictive distribution, using quantities such as sequence log-likelihood, perplexity or surprisal, predictive entropy, and energy-based scores \citep{huang2025look, liu2020energy}. %
Meanwhile, direct elicitation explicitly asks the model to evaluate the factuality of a statement or correctness of a proposed answer. For example, P(True) \citep{kadavath2022languagemodelsmostlyknow} uses the probability assigned to a ``True'' judgement as confidence.
Subsequent work \citet{heo2025uncertainty} has also considered a contrastive variant, which normalises ``True'' against ``False'' probabilities, while \citet{lemesle-etal-2025-paraphrase} similarly use a log-likelihood ratio between opposing judgments. These methods contrast judgments conditioned on the input, whereas CFE reverses the conditioning direction following Bayes' perspective.
Recent evaluative frameworks show that factual confidence can be sensitive to prompt \citep{mahaut-etal-2024-factual} and cross-lingual variation \citep{xue-etal-2025-mlingconf}.
Furthermore, confidence can also be elicited through verbalisation \citep{tian-etal-2023-just, xiong2024can}. We focus instead on likelihood-based confidence that does not require an additional verbalised self-assessment.

\paragraph{Truth Representation and Alignment.} A complementary line of work studies how factuality is encoded in LLM internal representations. Prior work has shown that truthfulness can be decoded from hidden activations \citep{azaria-mitchell-2023-internal, burns2023discovering} and is represented approximately linearly along truth-related directions \citep{marks2024the}.
These findings have facilitated downstream hallucination detection work \citep{ch-wang-etal-2024-androids, srey-etal-2025-unsupervised} and are closely linked to interventions that steer internal representations toward greater truthfulness \citep{li2023inferencetime, zhang-etal-2024-truthx}.
Related work on multilingual factual knowledge has identified language-dependent components in factual recall \citep{fierro-etal-2025-multilingual} and substantial variation in factual retrieval across languages \citep{jiang-etal-2020-x}.
This cross-lingual disparity has motivated methods that improve factual consistency across languages \citep{wang-etal-2025-lost-multilinguality, liu2026posttraining}.
Our likelihood geometry builds on these perspectives by deriving a truth-relevant direction for cross-lingual transfer, but differs in construction.
Rather than learning a truth direction from high-dimensional hidden states, we isolate a factual-evidence direction induced by the conditional likelihoods and align only this component across languages.

\section{Factuality as Contrastive Generative Evidence}

In this section, we formulate Contrastive Factual Evidence (CFE), by contrasting LM class-conditional likelihoods, and its corresponding probabilistic confidence, Contrastive Factual Posterior (CFP).
We then analyse how shared error between the surrogate conditionals can cancel in this contrast, clarifying when such cancellation can lead to improvement in calibration of factual confidence.

\subsection{Problem Setup}

Let $Y$ denote a natural-language claim and $C\in\{0,1\}$ its factuality, where $C=1$ indicates that the claim is factual. We write $p^*_c(y)=P(Y=y \mid C=c)$ for the unknown true class-conditional distribution and $\pi_c = P(C=c)$ for the corresponding class prior.

Our goal is to estimate the factual posterior 
\begin{equation}
    \eta (y) = P(C=1\mid Y=y)
\end{equation}
from a language model $q_\theta$. For convenience, we use $z_1$ and $z_0$ to denote natural-language conditions describing factual and non-factual statements, respectively, and define the induced surrogate distributions $q_c(y) = q_\theta (y\mid z_c)$.

\subsection{Factual Evidence and Confidence from Conditional Likelihoods}

By Bayes' rule,

\begin{equation}
\begin{split}
\eta(y) &= P(C=1\mid Y=y) \\ 
&= \frac{P(Y=y\mid C=1)P(C=1)}{\sum_{c\in\{0,1\}} P(Y=y\mid C=c)P(C=c)} \\
&= \frac{1}{1 + \frac{P(Y=y\mid C=0)P(C=0)}{P(Y=y\mid C=1)P(C=1)}} \\
&= \sigma \left( \log \frac{p^*_1(y)}{p^*_0(y)} + \log \frac{\pi_1}{\pi_0}\right),
\end{split}
\end{equation}
where $\sigma$ is the sigmoid function. We denote $E^*(y)=\log \left(p^*_1(y)/p^*_0(y)\right)$ as the ideal log-evidence and $\alpha=\log(\pi_1/\pi_0)$ as prior log-odds.

In practice, the true conditionals $p^*_c$ are unavailable. We instead use an LM to obtain surrogate conditionals $q_c$, giving the \textbf{Contrastive Factual Evidence (CFE)}
\begin{equation}
    \hat{E}(y) = \log q_1(y) - \log q_0(y)
\end{equation}
and the probabilistic confidence, \textbf{Contrastive Factual Posterior (CFP)}
\begin{equation}
    \hat\eta(y) = \sigma \left(\hat{E}(y) + \hat{\alpha}\right)
\end{equation}

Unless otherwise specified, these scores are normalised by token length, and are thus approximations of the true posterior, and are averaged across five pairs of paraphrased conditions. We provide detailed formulation in \Cref{subsec:cfe_formulation}. \Cref{tab:ablation} outlines ablation results over several formulation choices.

\subsection{Calibration under Shared Surrogate Error}

Define the surrogate log-error for each $c$
\begin{equation}
    \epsilon_c (y) = \log q_c(y) - \log p^*_c(y) 
\end{equation}
and prior misspecification $\delta_\pi = \hat{\alpha} - \alpha$. Then, the total posterior-logit error is $\epsilon_1 - \epsilon_0 + \delta_\pi$.
CFE therefore depends only on the differential, rather than absolute, surrogate error. Consequently, error components shared across the two conditionals are suppressed by subtraction. %
Our following analysis reveals that this cancellation translates into improved calibration when the two errors are correlated.

\begin{restatable}[{\citealp{kumar2019verified}}]{definition}{calibration}
    For a probabilistic predictor $f(Y)\in [0,1],$ the calibration error is
    \begin{equation}
        \mathrm{CE}(f) = \left( \mathbb{E}\left[ \left( \mathbb{E}\left[C\mid f(Y)\right] - f(Y)\right)^2\right] \right)^{1/2}
    \end{equation}
\end{restatable}

\begin{restatable}{corollary}{commonMode}
\label{cor:commonMode}
Suppose the two surrogate log-errors have equal means and variances $\sigma^2$, with correlation $\rho$, and the class prior is correctly specified, then
\begin{equation}
    \mathrm{CE}(\hat\eta) \leq \frac{\sigma}{2\sqrt{2}} \sqrt{1-\rho}
\end{equation}
\end{restatable}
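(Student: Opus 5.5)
The plan is to first establish a general bound of the form $\mathrm{CE}(\hat\eta) \le \tfrac14\bigl(\mathbb{E}[(\epsilon_1-\epsilon_0+\delta_\pi)^2]\bigr)^{1/2}$ and then specialise it under the stated moment assumptions. The corollary label suggests the paper derives it from such a general proposition, but I will prove that proposition directly.

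\textbf{Step 1: reduce calibration error to posterior error.} Write $\hat\eta = \hat\eta(Y)$ and $\eta = \eta(Y)$. Since $\hat\eta$ is a function of $Y$, the tower property gives $\mathbb{E}[C\mid\hat\eta] = \mathbb{E}[\eta\mid\hat\eta]$. Hence
\begin{equation}
\mathbb{E}[C\mid\hat\eta]-\hat\eta = \mathbb{E}[\eta-\hat\eta\mid\hat\eta].
\end{equation}
Conditional Jensen then yields $\mathrm{CE}(\hat\eta)^2 \le \mathbb{E}[(\eta-\hat\eta)^2]$. This step uses only the fact that the true posterior is $\eta$.

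\textbf{Step 2: pass from probabilities to logits.} The Bayes derivation gives $\eta=\sigma(E^*+\alpha)$ and $\hat\eta=\sigma(\hat E+\hat\alpha)$. The logit difference is
\begin{equation}
(\hat E+\hat\alpha)-(E^*+\alpha) = \epsilon_1-\epsilon_0+\delta_\pi .
\end{equation}
The sigmoid is $\tfrac14$-Lipschitz because $\sigma' = \sigma(1-\sigma)\le \tfrac14$. Therefore $|\eta-\hat\eta|\le \tfrac14|\epsilon_1-\epsilon_0+\delta_\pi|$ pointwise. Combining with Step 1 gives
\begin{equation}
\mathrm{CE}(\hat\eta)\le \tfrac14\bigl(\mathbb{E}[(\epsilon_1-\epsilon_0+\delta_\pi)^2]\bigr)^{1/2}.
\end{equation}

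\textbf{Step 3: apply the assumptions.} A correctly specified prior gives $\delta_\pi=0$. Equal means give $\mathbb{E}[\epsilon_1-\epsilon_0]=0$, so the second moment equals the variance:
\begin{equation}
\mathrm{Var}(\epsilon_1-\epsilon_0)=\sigma^2+\sigma^2-2\rho\sigma^2 = 2\sigma^2(1-\rho).
\end{equation}
Substituting gives $\mathrm{CE}(\hat\eta)\le \tfrac14\sqrt{2}\,\sigma\sqrt{1-\rho} = \tfrac{\sigma}{2\sqrt2}\sqrt{1-\rho}$, which is the claimed bound.

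\textbf{Main obstacle.} The delicate point is Step 1. One must be careful that conditioning on $\hat\eta$ rather than on $Y$ is legitimate. It is, because $\sigma(\hat\eta)\subseteq\sigma(Y)$, so the tower property applies. All expectations are taken over $Y$ drawn from the true marginal $\pi_1p_1^*+\pi_0p_0^*$. The moment assumptions on $\epsilon_c(Y)$ should be read under this same distribution. I would state that convention explicitly, along with finiteness of second moments.
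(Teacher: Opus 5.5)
Your proposal is correct and follows the paper's proof essentially step for step. The paper proves the general bound (Proposition~\ref{prop:robustCal}) using the same tower-property, conditional-Jensen and $\tfrac14$-Lipschitz-sigmoid argument. It then specialises with $\mu_1=\mu_0$, $\sigma_1=\sigma_0=\sigma$ and $\delta_\pi=0$ to obtain $\tfrac14\sqrt{2\sigma^2(1-\rho)}$, exactly as in your Step 3. Your explicit convention that all moments are taken under the true marginal of $Y$ is a useful clarification that the paper leaves implicit.
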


Thus, shared surrogate error yields a tighter calibration bound when the two errors are strongly correlated, with the calibration error vanishing to zero as $\rho\rightarrow1$ (\Cref{cor:commonMode}).
Such a positive correlation is plausible because both $q_1$ and $q_0$ score the same statement using the same underlying LM, and may therefore inherit common statement-level effects such as lexical frequency, surface form, or other model-specific likelihood biases \citep{holtzman-etal-2021-surface, lovering-etal-2025-language}.
These effects are known to systematically influence LM probabilities, although whether they manifest as shared error across our two conditionals is ultimately an empirical question.
\Cref{prop:robustCal} provides the general bound allowing unequal means, variances, and prior misspecification.

\begin{table*}[ht]
  \centering
  \setlength{\tabcolsep}{2pt}
  \renewcommand{\arraystretch}{1.2}
  \begin{subtable}{\textwidth}
  \resizebox{\textwidth}{!}{%
  \begin{tabular}{@{}l*{9}{c}@{}}
    \toprule
    \textbf{Method} & \textbf{en} & \textbf{fr} & \textbf{de} & \textbf{hi} & \textbf{it} & \textbf{pt} & \textbf{es} & \textbf{th} & \textbf{Mean} \\
    \midrule
    \addlinespace[1pt]
    MSP & \bestmse{.1230}{.0080} & \bestmse{.1118}{.0073} & \bestmse{.1096}{.0068} & \mse{.1712}{.0052} & \bestmse{.1149}{.0070} & \bestmse{.1171}{.0071} & \bestmse{.1096}{.0067} & \mse{.1712}{.0055} & \secondmse{.1285}{.0067} \\
    Energy & \mse{.1726}{.0072} & \mse{.1463}{.0081} & \mse{.1450}{.0078} & \secondmse{.1454}{.0092} & \mse{.1424}{.0081} & \mse{.1536}{.0077} & \mse{.1432}{.0077} & \secondmse{.1573}{.0068} & \mse{.1507}{.0078} \\
    Entropy & \mse{.3065}{.0031} & \mse{.3110}{.0032} & \mse{.3030}{.0033} & \mse{.3238}{.0035} & \mse{.3031}{.0037} & \mse{.3301}{.0029} & \mse{.3349}{.0030} & \mse{.3068}{.0034} & \mse{.3149}{.0033} \\
    LN-Entropy & \secondmse{.1257}{.0096} & \mse{.1481}{.0079} & \mse{.1425}{.0082} & \mse{.1467}{.0094} & \mse{.1333}{.0092} & \mse{.1321}{.0095} & \secondmse{.1366}{.0082} & \mse{.1855}{.0064} & \mse{.1438}{.0086} \\
    NLL & \mse{.2937}{.0037} & \mse{.2992}{.0032} & \mse{.3016}{.0033} & \mse{.3088}{.0035} & \mse{.3072}{.0036} & \mse{.3029}{.0034} & \mse{.2936}{.0036} & \mse{.2813}{.0041} & \mse{.2986}{.0036} \\
    Perplexity & \mse{.4781}{.0012} & \mse{.4797}{.0012} & \mse{.4887}{.0010} & \mse{.4389}{.0019} & \mse{.4854}{.0010} & \mse{.4862}{.0011} & \mse{.4804}{.0011} & \mse{.4815}{.0010} & \mse{.4773}{.0012} \\
    \textsc{P(True)} & \mse{.3485}{.0037} & \mse{.3455}{.0036} & \mse{.3400}{.0030} & \mse{.3377}{.0028} & \mse{.3425}{.0034} & \mse{.3356}{.0034} & \mse{.3348}{.0034} & \mse{.3302}{.0028} & \mse{.3394}{.0033} \\
    \textsc{P(True)}-norm & \mse{.2359}{.0059} & \mse{.2205}{.0070} & \mse{.2203}{.0066} & \mse{.1630}{.0060} & \mse{.2118}{.0068} & \mse{.2117}{.0064} & \mse{.2252}{.0064} & \mse{.1625}{.0068} & \mse{.2063}{.0065} \\
    Gen-Z\textsuperscript\textdagger & \mse{.3456}{.0110} & \mse{.3863}{.0125} & \mse{.3809}{.0122} & \mse{.4613}{.0127} & \mse{.3933}{.0121} & \mse{.3847}{.0125} & \mse{.3646}{.0117} & \mse{.4527}{.0135} & \mse{.3962}{.0123} \\
    \rowcolor{oursgray} \textbf{CFP (ours)} & \mse{.1584}{.0098} & \secondmse{.1309}{.0109} & \secondmse{.1207}{.0105} & \bestmse{.0599}{.0094} & \secondmse{.1226}{.0110} & \secondmse{.1228}{.0108} & \mse{.1401}{.0103} & \bestmse{.0643}{.0099} & \bestmse{.1150}{.0103} \\
    \bottomrule
  \end{tabular}
  }
  \caption{Expected Calibration Error (ECE) ($\downarrow$)}
  \end{subtable}

  \begin{subtable}{\textwidth}
  \resizebox{\textwidth}{!}{%
  \begin{tabular}{@{}l*{9}{c}@{}}
    \addlinespace[5pt]
    \midrule
    MSP & \mse{.2625}{.0029} & \secondmse{.2558}{.0030} & \secondmse{.2574}{.0027} & \mse{.2804}{.0024} & \secondmse{.2578}{.0030} & \secondmse{.2587}{.0026} & \secondmse{.2566}{.0026} & \mse{.2832}{.0025} & \secondmse{.2641}{.0027} \\
    Energy & \mse{.2809}{.0034} & \mse{.2708}{.0041} & \mse{.2696}{.0038} & \mse{.2739}{.0037} & \mse{.2697}{.0040} & \mse{.2732}{.0040} & \mse{.2689}{.0037} & \mse{.2778}{.0030} & \mse{.2731}{.0037} \\
    Entropy & \mse{.3422}{.0027} & \mse{.3432}{.0027} & \mse{.3399}{.0027} & \mse{.3567}{.0027} & \mse{.3388}{.0028} & \mse{.3563}{.0026} & \mse{.3598}{.0026} & \mse{.3479}{.0027} & \mse{.3481}{.0027} \\
    LN-Entropy & \secondmse{.2620}{.0036} & \mse{.2678}{.0041} & \mse{.2669}{.0037} & \secondmse{.2712}{.0042} & \mse{.2638}{.0045} & \mse{.2629}{.0037} & \mse{.2652}{.0037} & \mse{.2894}{.0030} & \mse{.2687}{.0038} \\
    NLL & \mse{.3121}{.0024} & \mse{.3251}{.0028} & \mse{.3268}{.0027} & \mse{.3454}{.0027} & \mse{.3320}{.0028} & \mse{.3276}{.0030} & \mse{.3195}{.0029} & \mse{.3276}{.0031} & \mse{.3270}{.0028} \\
    Perplexity & \mse{.4739}{.0014} & \mse{.4763}{.0015} & \mse{.4870}{.0010} & \mse{.4422}{.0018} & \mse{.4837}{.0011} & \mse{.4845}{.0012} & \mse{.4772}{.0013} & \mse{.4811}{.0010} & \mse{.4757}{.0013} \\
    P(True) & \mse{.3799}{.0017} & \mse{.3829}{.0014} & \mse{.3786}{.0015} & \mse{.3869}{.0015} & \mse{.3775}{.0015} & \mse{.3719}{.0015} & \mse{.3727}{.0015} & \mse{.3816}{.0015} & \mse{.3790}{.0015} \\
    P(True)-norm & \mse{.2664}{.0007} & \mse{.2680}{.0007} & \mse{.2685}{.0007} & \mse{.2756}{.0005} & \mse{.2658}{.0007} & \mse{.2674}{.0007} & \mse{.2645}{.0006} & \secondmse{.2738}{.0006} & \mse{.2688}{.0007} \\
    Gen-Z\textsuperscript\textdagger & \mse{.3456}{.0110} & \mse{.3863}{.0125} & \mse{.3809}{.0122} & \mse{.4613}{.0127} & \mse{.3933}{.0121} & \mse{.3847}{.0125} & \mse{.3646}{.0117} & \mse{.4527}{.0135} & \mse{.3962}{.0123} \\
    \rowcolor{oursgray} \textbf{CFP (ours)} & \bestmse{.2306}{.0010} & \bestmse{.2400}{.0008} & \bestmse{.2395}{.0008} & \bestmse{.2481}{.0004} & \bestmse{.2407}{.0008} & \bestmse{.2411}{.0008} & \bestmse{.2379}{.0008} & \bestmse{.2462}{.0006} & \bestmse{.2405}{.0007} \\
    \bottomrule
  \end{tabular}
  }
  \caption{Brier's Score (BS) ($\downarrow$)}
  \end{subtable}

  \begin{subtable}{\textwidth}
  \resizebox{\textwidth}{!}{%
  \begin{tabular}{@{}l*{9}{c}@{}}
    \addlinespace[5pt]
    \midrule
    MSP & \mse{.5522}{.0164} & \mse{.5877}{.0177} & \mse{.5748}{.0152} & \mse{.5524}{.0180} & \mse{.5804}{.0170} & \mse{.5738}{.0160} & \mse{.5747}{.0150} & \mse{.5383}{.0164} & \mse{.5668}{.0165} \\
    Energy & \mse{.5545}{.0166} & \mse{.5593}{.0182} & \mse{.5663}{.0162} & \mse{.5319}{.0177} & \mse{.5560}{.0180} & \mse{.5546}{.0175} & \mse{.5656}{.0164} & \mse{.5303}{.0155} & \mse{.5523}{.0170} \\
    Entropy & \mse{.5568}{.0134} & \mse{.5783}{.0154} & \mse{.5663}{.0141} & \mse{.5367}{.0130} & \mse{.5660}{.0144} & \mse{.5650}{.0134} & \mse{.5629}{.0142} & \mse{.5175}{.0155} & \mse{.5562}{.0142} \\
    LN-Entropy & \mse{.5565}{.0163} & \mse{.5795}{.0184} & \mse{.5736}{.0159} & \mse{.5504}{.0181} & \mse{.5724}{.0176} & \mse{.5691}{.0164} & \mse{.5720}{.0156} & \mse{.5368}{.0160} & \mse{.5638}{.0168} \\
    NLL & \mse{.7250}{.0126} & \mse{.6680}{.0150} & \mse{.6608}{.0126} & \mse{.5504}{.0137} & \mse{.6506}{.0144} & \mse{.6642}{.0147} & \mse{.6746}{.0142} & \mse{.5598}{.0162} & \mse{.6442}{.0142} \\
    Perplexity & \mse{.7235}{.0148} & \mse{.6566}{.0176} & \mse{.6601}{.0152} & \mse{.5598}{.0169} & \mse{.6471}{.0185} & \mse{.6494}{.0162} & \mse{.6725}{.0158} & \mse{.5843}{.0172} & \mse{.6442}{.0165} \\
    P(True) & \mse{.6184}{.0149} & \mse{.5896}{.0153} & \mse{.6007}{.0156} & \mse{.5426}{.0160} & \mse{.6062}{.0150} & \mse{.6107}{.0149} & \mse{.6016}{.0154} & \mse{.5577}{.0161} & \mse{.5909}{.0154} \\
    P(True)-norm & \secondmse{.7444}{.0117} & \secondmse{.7077}{.0134} & \secondmse{.7071}{.0127} & \bestmse{.6404}{.0156} & \bestmse{.7091}{.0130} & \bestmse{.6970}{.0126} & \secondmse{.7197}{.0124} & \bestmse{.6284}{.0158} & \bestmse{.6942}{.0134} \\
    Gen-Z\textsuperscript\textdagger & \mse{.6557}{.0109} & \mse{.6145}{.0125} & \mse{.6196}{.0123} & \mse{.5394}{.0127} & \mse{.6076}{.0121} & \mse{.6161}{.0125} & \mse{.6358}{.0117} & \mse{.5484}{.0135} & \mse{.6046}{.0123} \\
    \rowcolor{oursgray} \textbf{CFP (ours)} & \bestmse{.7780}{.0117} & \bestmse{.7135}{.0142} & \bestmse{.7081}{.0135} & \secondmse{.5806}{.0161} & \secondmse{.7009}{.0144} & \secondmse{.6968}{.0144} & \bestmse{.7309}{.0133} & \secondmse{.6013}{.0176} & \secondmse{.6888}{.0144} \\
    \bottomrule
  \end{tabular}%
  }
  \caption{Area Under the ROC curve (AUROC) ($\uparrow$)}
  \end{subtable}

  \caption{Main results by language, with rightmost column representing mean across all languages.
  Each cell reports mean with five-fold cross-validation standard error (SE).
  \rev{\textsuperscript{\textdagger} Gen-Z outputs hard predictions, so ECE and BS reduce to classification error and are reported only for completeness.}
  }
  \label{tab:main-results-language}
\end{table*}

\section{Evaluating Calibration and Discrimination}
\label{sec:expt}

\subsection{Experimental Setup}

\paragraph{Datasets.} We utilise the \textbf{Multilingual True-False} \citep{xiao-etal-2025-analyzing} datasets, which comprise factually true and false statements from six different topics and eight languages, including English (en), French (fr), German (de), Hindi (hi), Italian (it), Portuguese (pt), Spanish (es), and Thai (th). In total, there are $48,680$ statements, with an average of $6,085$ per language. Each statement has a matching translation in each language. This crucially allows us to evaluate performance fairly on semantically equivalent datasets and to study our cross-lingual factual consistency alignment proposal in \Cref{sec:fact_align}.

\paragraph{Baselines.} We compare our proposed CFP score against nine baselines, spanning three families of methods, namely
\begin{inparaenum}[(i)]
    \item marginal uncertainty: Maximum Sequence Probability (MSP), Energy, Entropy, Length-Normalised (LN) Entropy, Negative Log-Likelihood (NLL), and Perplexity \citep{huang2025look};
    \item direct elicitation: P(True) \citep{kadavath2022languagemodelsmostlyknow} and P(True)-norm \citep{heo2025uncertainty}; and
    \item generative classification, Gen-Z \citep{kumar2024gen}.
\end{inparaenum}
For non-probabilistic scores which are not in the $[0, 1]$ range, we normalise using $\operatorname{min-max}$ linear scaling (see~\Cref{sec:supp_expt}).
We adapt Gen-Z, originally proposed for zero-shot text classification, to binary factuality by treating factual and non-factual conditions as candidate classes and predicting the class whose condition assigns the claim greater likelihood. This provides a particularly similar baseline: it uses the same reverse-generative principle as CFP, but is formulated as a categorical classifier rather than as a continuous measure of confidence.
\rev{We report ECE and Brier's Score for completeness, but note that under Gen-Z hard predictions, these metrics collapse to classification error and should not be interpreted as directly comparable measures of calibration.}

\paragraph{Metrics.} We evaluate calibration with Expected Calibration Error \citep[ECE;][]{pmlr-v70-guo17a}, using 10 equal-width bins, discrimination with Area Under the Receiver-Operating Characteristic curve (AUROC), and both with Brier's Score \citep[BS;][]{glenn1950verification}.
\rev{Because ECE can be low for weakly discriminative predictions, we interpret calibration jointly with Brier's Score and AUROC rather than treating ECE in isolation.}

\paragraph{Language Models.} We test on three families of models:
\begin{inparaenum}[(i)]
\item Qwen-3 (0.6B, 1.7B, 4B, 8B, 14B) \citep{yang2025qwen3};
\item Qwen-3.5 (0.8B, 2B, 4B, 9B) \citep{qwen3.5_2026}; and
\item Gemma-4 (2B, 4B, 12B) \citep{team2026gemma}.
\end{inparaenum}
Furthermore, for each model, we test both the base and instruction-tuned versions, resulting in a total of 12 pairs of LMs, ranging from 0.6 to 14B in size.

\subsection{Results and Discussion}

\begin{table}[htbp]
    \centering
    \setlength{\tabcolsep}{6pt}
    \renewcommand{\arraystretch}{1.25}
    \resizebox{\linewidth}{!}{%
    \begin{tabular}{lccc}
    \toprule
    \textbf{Component} & \textbf{ECE ($\downarrow$)} & \textbf{BS ($\downarrow$)} & \textbf{AUROC ($\uparrow$)} \\
    \midrule
    \textbf{$\log q_1$} & \mse{.2903}{.0035} & \mse{.3235}{.0027} & \mse{.6309}{.0142}\\
    \textbf{$\log q_0$} & \mse{.2927}{.0035} & \mse{.3286}{.0028} & \mse{.6032}{.0148}\\
    \textbf{CFE$_{\text{raw}}$} & \mse{.1420}{.0084} & \mse{.2369}{.0025} & \mse{.6894}{.0144} \\
    \textbf{CFE$_\text{single}$} & \mse{.1373}{.0079} & \mse{.2411}{.0023} & \mse{.6729}{.0146} \\
    \textbf{CFE} & \mse{.1446}{.0080} & \mse{.2387}{.0024} & \mse{.6888}{.0144} \\
    \textbf{CFP} & \mse{.1150}{.0103} & \mse{.2405}{.0007} & \mse{.6888}{.0144} \\
    \bottomrule
    \end{tabular}
    }
    \caption{Ablation results with different formulation choices.}
    \label{tab:ablation}
\end{table}

\paragraph{Main Results.}
\Cref{tab:main-results-language} presents overall performance across topics, models, and prompts.
On average, \rev{CFP achieves the lowest mean ECE (0.115) and Brier's Score (0.241),} while maintaining competitive discrimination with an AUROC of 0.689, compared with 0.694 for P(True)-norm.
\rev{However, the ECE advantage is not uniform across languages: MSP attains lower ECE in six of eight languages, while CFP's lower aggregate ECE is driven primarily by Hindi and Thai. Moreover, in these two languages, CFP's low ECE coincides with weak discrimination and Brier's Score close to 0.25, the value attained by a constant predictor. This suggests that low ECE partly reflects confidence compression rather than a stronger factuality signal.}
\rev{Importantly, weak discrimination in Hindi and Thai occurs broadly across methods, suggesting a limitation of the underlying model signal in these languages rather than a failure specific to CFP.}
This cross-lingual gap motivates \Cref{sec:fact_align}, where we investigate whether stronger English factual evidence can be transferred to target languages.

\paragraph{Ablation.}
\Cref{tab:ablation} isolates three design choices underlying CFP. First, we evaluate the two conditional likelihoods, $\log q_1$ and $\log q_0$, separately to test whether their contrast is beneficial. We further compare raw likelihood (CFE$_\text{raw}$) against token-length normalisation (CFE), and a single pair of truth conditions (CFE$_\text{single}$) against averaging over five paraphrased conditions (CFE). Non-probabilistic scores are linearly scaled to $[0,1]$ for calibration evaluation.
Contrasting the two conditionals provides a significant improvement, with CFE reducing ECE from approximately $0.29$ to $0.14$ and increasing AUROC from $0.60/0.63$ to $0.69$.
Length normalisation has only a modest effect, while averaging across paraphrased conditions slightly improves discrimination over the single-prompt variant. 
Finally, mapping CFE to CFP with the sigmoid substantially reduces ECE \rev{while leaving AUROC unchanged}.
Overall, the main gain arises from contrasting the competing likelihoods while other auxiliary choices have limited effects.

\paragraph{Post-Training and Prompt Template.}
In \Cref{tab:template_posttrain}, we provide a breakdown of instruction-tuned (IT) versus base models, and with versus without using chat template.
We note  that the unnatural combination of IT and Chat is only illustrated for completeness, and other results do not include this configuration.
\rev{Post-training and chat templating consistently reduces discrimination, but their effects on calibration are more mixed.
For CFP, ECE decreases from 0.140 to 0.088 between base/no-chat and IT/chat settings even as AUROC falls from 0.736 to 0.652.
CFP nevertheless exhibits a smaller drop in discrimination than P(True)-norm across these configurations.}

\begin{figure}
    \centering
    \includegraphics[width=\linewidth]{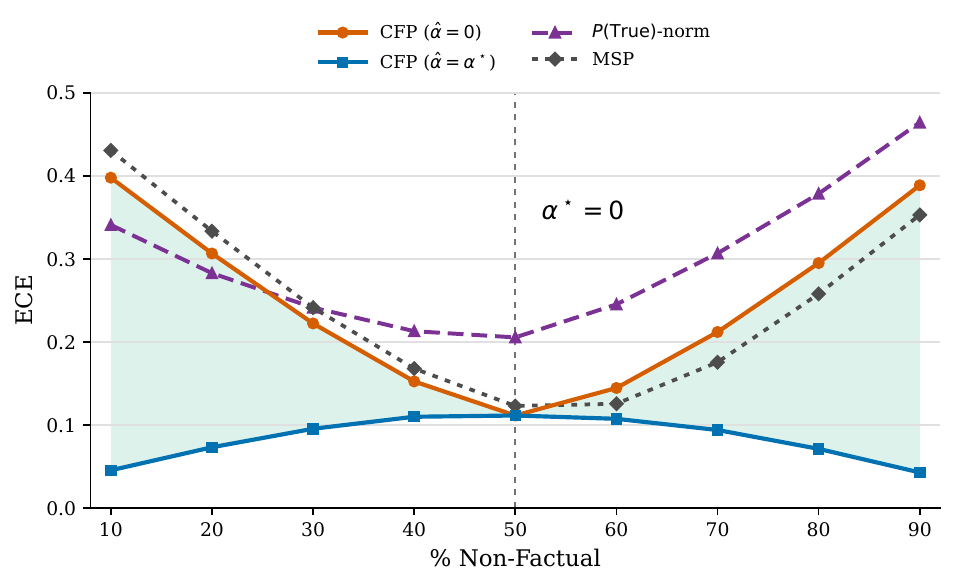}
    \caption{ECE for varying non-factual proportions with fixed $\hat{\alpha}=0$ and corrected $\hat{\alpha}=\alpha^\star$.}
    \label{fig:prior_misspec}
\end{figure}

\paragraph{Prior Misspecification.} To assess the effects on CFP's calibration due to prior misspecification, we simulate settings in which the proportion of non-factual claims varies from 10\% to 90\%.
For each prevalence, we construct five subsets without replacement and compare the uncorrected posterior, which assumes equal priors ($\hat{\alpha}=0$), against the corrected posterior using the ideal prior log-odds.
As \Cref{fig:prior_misspec} shows, CFP's calibration is sensitive to the prior, with ECE increasing from $0.1$ under the original balanced class setting to $0.4$ under the extreme case.
Even though CFP frequently has better calibration than P(True)-norm, this sensitivity represents a limitation when the class prior is unknown.
On the other hand, when prior knowledge is available, ECE reduces to as low as 0.04, without any retraining and leaving AUROC unchanged.

\section{Geometry-Guided Cross-Lingual Factuality Alignment}
\label{sec:fact_align}

CFE discriminates factual from non-factual statements reliably in English, but its performance degrades substantially in lower-resource languages.
This gap motivates the question: \emph{can stronger English factual evidence be transferred to a target language without requiring target-language factuality labels?}
To achieve this, for a translated claim with the same underlying proposition, we would ideally preserve the factual evidence assigned to its English counterpart, while allowing language-specific differences in absolute likelihood.

\begin{figure}[htbp]
    \centering
    \includegraphics[width=\linewidth]{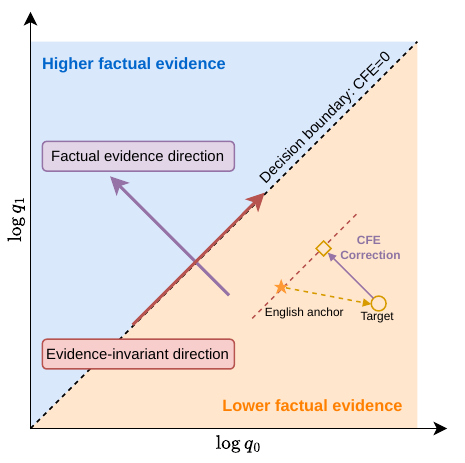}
    \caption{Geometry of CFE alignment.}
    \label{fig:cfe_align}
\end{figure}

\subsection{Likelihood Geometry of Factual Evidence}

For a claim $y$, the two surrogate conditional likelihoods induce a two-dimensional representation $\bm{z}(y) = [\log q_0(y), \,\log q_1(y)]^\intercal$. CFE can be written as the projection onto the unit direction $\bm{t}$
\begin{equation}
    \hat E(y) = \sqrt{2} \bm{t}^\intercal \bm{z}, \;\bm{t}=\frac{1}{\sqrt 2}
    \begin{pmatrix}
        -1\\1
    \end{pmatrix}
\end{equation}

As illustrated in \Cref{fig:cfe_align}, movement along $\bm{t}$ changes factual evidence, while the orthogonal $\bm{m}=1/\sqrt2 (1,\, 1)^\intercal$ is evidence-invariant. The decision boundary $\hat{E}(y)=0$ is therefore the diagonal $\log q_1 = \log q_0$.

Related work has treated LM log-likelihood as geometric coordinates \citep{oyama-etal-2025-mapping} and has used controlled probability-space comparisons to isolate linguistic attributes \citep{hu-etal-2026-string}.
In our setting, both coordinates instead score the same claim under competing factual and non-factual conditions, analytically inducing a factual-evidence and an orthogonal evidence-invariant direction.

\subsection{Factuality Alignment}

In the induced likelihood geometry, we treat the stronger English CFE as an anchor and align the target-language claim only along the factual-evidence direction $\bm{t}$. This selectively transfers factuality evidence while leaving variation along the evidence-invariant direction $\bm{m}$ unconstrained.

\paragraph{Objective.}
Given parallel English ($\text{en}$) and target-language $l$ claims $\{(y_i^\text{en}, y_i^l)\}_{i=1}^N$, we use the frozen base model's English CFE as an anchor and optimise LoRA parameters $\phi$
\begin{equation}
    \frac{1}{N} \sum_{i=1}^{N}
    \left[\hat{E}_{\phi}(y_i^{l}) - \operatorname{sg}\left(\hat{E}(y_i^\text{en})\right)\right]^2,
\end{equation}
where $\operatorname{sg}$ is the stop-gradient operator \citep{chen2021exploring} to treat the English CFE as a constant and to ensure the adaptation transfers the stronger English factuality signal to the target language.

\begin{table}[htbp]
    \centering
    \setlength{\tabcolsep}{7pt}
    \renewcommand{\arraystretch}{1.2}

    \begin{subtable}{\linewidth}
        \centering
        \resizebox{\linewidth}{!}{%
        \begin{tabular}{llccc}
            \toprule
            \multirow{2}{*}{\textbf{Language}} & \multirow{2}{*}{\textbf{Objective}}& \multirow{2}{*}{\textbf{True--False}} & \multicolumn{2}{c}{\textbf{Belebele}} \\
            \cmidrule(lr){4-5}
            & & & \textbf{Target} & \textbf{English} \\
            \midrule

            English & Base & 83.9 & --- & 86.9 \\
            \midrule
            \multirow{3}{*}{German} & Base & 77.4 & 81.9 & 86.9 \\
            & Full & 69.9 \footnotesize \textcolor{darkred}{$(-7.5)$} & 81.7 \footnotesize \textcolor{darkred}{$(-0.2)$} & 86.9 \footnotesize $(0.0)$ \\
            & CFE & 82.2 \footnotesize \textcolor{darkgreen}{$(+4.7)$} & 82.0 \footnotesize \textcolor{darkgreen}{$(+0.1)$} & 87.0 \footnotesize \textcolor{darkgreen}{$(+0.1)$} \\
            \midrule
            
            \multirow{3}{*}{Hindi} & Base & 51.4 & 42.8 & 86.9 \\
            & Full & 50.6 \footnotesize \textcolor{darkred}{$(-0.8)$} & 40.2 \footnotesize \textcolor{darkred}{$(-2.6)$} & 86.9 {\footnotesize $(0.0)$} \\
            & CFE & 57.1 \footnotesize \textcolor{darkgreen}{$(+5.7)$} & 41.7 \footnotesize \textcolor{darkred}{$(-1.1)$} & 86.8 \footnotesize \textcolor{darkred}{$(-0.1)$} \\

            \bottomrule
        \end{tabular}%
        }
        \caption{Qwen-3.5-2B}
    \end{subtable}

    \vspace{6pt}

    \begin{subtable}{\linewidth}
        \centering
        \resizebox{\linewidth}{!}{%
        \begin{tabular}{llccc}
            \toprule
            \multirow{2}{*}{\textbf{Language}} & \multirow{2}{*}{\textbf{Objective}}& \multirow{2}{*}{\textbf{True--False}} & \multicolumn{2}{c}{\textbf{Belebele}} \\
            \cmidrule(lr){4-5}
            & & & \textbf{Target} & \textbf{English} \\
            \midrule
            
            English & Base & 87.9 & --- & 92.9 \\
            \midrule
            \multirow{3}{*}{German}
            & Base & 85.0 & 92.4 & 92.9 \\
            & Full & 75.9 \textcolor{darkred}{\footnotesize $(-9.0)$} & 92.1 \footnotesize \textcolor{darkred}{$(-0.3)$} & 93.0 \footnotesize \textcolor{darkgreen}{$(+0.1)$} \\
            & CFE & 87.4 \footnotesize \textcolor{darkgreen}{$(+2.4)$} & 92.4 {\footnotesize $(0.0)$} & 93.1 \footnotesize \textcolor{darkgreen}{$(+0.2)$} \\
            \midrule

            \multirow{3}{*}{Hindi} & Base & 57.0 & 65.3 & 92.9 \\
            & Full & 57.1 \footnotesize \textcolor{darkgreen}{$(+0.1)$} & 64.7 \footnotesize \textcolor{darkred}{$(-0.6)$} & 93.0 \footnotesize \textcolor{darkgreen}{$(+0.1)$} \\
            & CFE & 70.5 \footnotesize \textcolor{darkgreen}{$(+13.5)$} & 65.9 \footnotesize \textcolor{darkgreen}{$(+0.6)$} & 93.0 \footnotesize \textcolor{darkgreen}{$(+0.1)$} \\
            \bottomrule
        \end{tabular}%
        }
        \caption{Qwen-3.5-4B}
    \end{subtable}

    \caption{Alignment results on True-False discriminability (AUROC) and general-task retention on Belebele (Accuracy). 
    ``Full'' and ``CFE'' are finetuned on True-False training sets with full-consistency and CFE objective, respectively.
    \rev{English Base row reports the frozen English reference used as alignment anchor.}
    }
    \label{tab:ft_discriminative}
\end{table}

\paragraph{Baseline.}

Motivated by consistency regularisation methods that encourage semantically equivalent or translated inputs to induce similar model predictions \citep{elazar-etal-2021-measuring, zheng-etal-2021-consistency, gao-etal-2023-improving}, we construct a full-consistency baseline that aligns both likelihood coordinates

\begin{equation}
\frac{1}{2N} \sum_{i=1}^{N} \sum_{c \in\{0,1\}}\left[ \log q_c(y_i^l) - \operatorname{sg} (\log q_c(y_i^\text{en}))\right]^2
\end{equation}

This objective is analogous to existing full-distribution consistency objectives, but operates on the two conditional log-likelihoods. It therefore constrains both the factual-evidence and evidence-invariant directions, allowing us to test whether CFE's selective alignment in the factual-evidence direction is beneficial.

\paragraph{Finetuning Setting.} We consider two base models, Qwen-3.5-2B and Qwen-3.5-4B. For each target language (German for high-resource and Hindi for low-resource evaluation) and True-False topic, we train a separate LoRA adapter \citep{hu2022lora}. We adopt rank $r=8$, scaling factor $\alpha=16$, and dropout 0.05, and train for at most five epochs with learning rate $3\times10^{-5}$, weight decay 0.01, and batch size 16.

\paragraph{Remark.} Our goal is not to compare general cross-lingual consistency objectives, but to test a specific implication of the CFE geometry: because factual evidence depends only on the likelihood-ratio coordinate, cross-lingual transfer need only constrain this direction rather than the full conditional-likelihood representation.

\subsection{Experimental Results}

We evaluate alignment along three dimensions. First, to test whether it recovers degraded target-language factual discrimination, we report AUROC on held-out True-False statements, averaged across topics (\Cref{tab:ft_discriminative}). At the same time, we evaluate whether factuality alignment preserves broader language capability with accuracy on Belebele \citep{bandarkar-etal-2024-belebele} in both English and the corresponding target language.
Finally, we assess open-ended factual generation through biography generation.
We sample 50 entities from FActScore \citep{min-etal-2023-factscore}, prompt the model to generate a biography for each using up to a maximum of $512$ tokens, and measure factual precision (\Cref{tab:ft_generative}).
We provide more details in \Cref{subsec:fact_setup}.

\begin{table}[htbp]
    \centering
    \setlength{\tabcolsep}{3pt}
    \renewcommand{\arraystretch}{1.25}

    \resizebox{\linewidth}{!}{%
    \begin{tabular}{llccc}
        \toprule
        \textbf{Language} & \textbf{Objective}& \textbf{Factual Precision (\%)} & \textbf{\# Claims} & \textbf{\# Correct}\\
        \midrule
        \multirow{3}{*}{English} & Base & 41.29 & 12.74 & 5.26 \\
        & Full & 38.93 \footnotesize \textcolor{darkred}{(-2.36)} & 12.87 \footnotesize \textcolor{darkgreen}{(+0.13)} & 5.01 \footnotesize \textcolor{darkred}{(-0.25)} \\
        & CFE & 42.32 \footnotesize \textcolor{darkgreen}{(+1.03)} & 11.67 \footnotesize \textcolor{darkred}{(-1.07)} & 4.94 \footnotesize \textcolor{darkred}{(-0.32)} \\

        \midrule
        \multirow{3}{*}{Hindi} & Base & 27.47 & 11.72 & 3.22 \\
        & Full & 32.42 \footnotesize \textcolor{darkgreen}{(+4.95)} & 8.29 \footnotesize \textcolor{darkred}{(-3.43)} & 2.69 \footnotesize \textcolor{darkred}{(-0.53)} \\
        & CFE & 32.88 \footnotesize \textcolor{darkgreen}{(+5.41)} & 10.92 \footnotesize \textcolor{darkred}{(-0.80)} & 3.59 \footnotesize \textcolor{darkgreen}{(+0.37)} \\
        \bottomrule
    \end{tabular}%
    }
    \caption{Long-form biography generation results using Qwen-3.5-4B, base and finetuned on Hindi statements.}
    \label{tab:ft_generative}
\end{table}

As shown in \Cref{tab:ft_discriminative} (breakdown in \Cref{tab:align_breakdown2b,tab:align_breakdown4b}), evidence-selective CFE alignment consistently improves target-language factual discrimination with minimal effects on general-task performance.
\rev{Relative to the frozen English reference, CFE alignment improves considerably.
For example, with Qwen-3.5-4B finetuned on Hindi using CFE, AUROC increases from 57.0 to 70.5, closing 44\% of the gap to English's 87.9.}
In comparison, full consistency does not reliably narrow this gap and substantially degrades German discrimination.
These results support our hypothesis that constraining only the factual-evidence direction can be preferable to aligning the complete likelihood representation.
We observe a similar trend in the biography generation task (\Cref{tab:ft_generative}; breakdown in \Cref{tab:align_bio_breakdown}).
\rev{Both CFE and full consistency improve factual precision by a similar amount, but their generation behaviour differs.
Full consistency produces substantially fewer claims and fewer correct claims per entity, whereas CFE alignment approximately preserves the number of claims generated while increasing the number of correct claims.}
Thus, these results suggest that evidence-selective alignment can improve precision while better preserving factual output volume.

\section{Conclusion}

In this work, we study contrastive generative likelihoods as factual evidence, instantiating their likelihood difference as CFE and the probabilistic CFP.
\rev{Empirically, CFP is competitive with existing likelihood baselines across both calibration and discrimination, although its relative advantage depends on the language and evaluation setting.
In lower-resource languages, low ECE coincides with weak discrimination.} %
Motivated by this, we interpret the paired conditional likelihoods in CFE geometrically, and derive a selective evidence alignment objective that transfers stronger English evidence to target languages while leaving evidence-irrelevant variation unconstrained. %
In our experiments, this selective alignment improves target-language discrimination and factual generation while largely preserving general-task performance.
Taken together, our results highlight the importance of evaluating factual confidence through both discrimination and calibration perspectives, and suggest that isolating the evidence-bearing component of likelihoods provides a useful route toward cross-lingual factuality transfer.

\section*{Limitations}

Our  work has several limitations. First, the multilingual True-False benchmark consists primarily of short, atomic factual statements.
This controlled setting is well suited to evaluating semantically matched claims across languages, but does not capture more complex or long-form factuality; such extension remains future work.
Second, our main evaluation covers LLMs only up to 14B parameters, so generalisation to much larger models is plausible but untested.
Finally, our cross-lingual alignment experiments are intentionally a proof of concept.
Due to computational constraints, we finetune only 2B-4B models and evaluate general-task retention with Belebele and open-ended factuality through biography generation.
These benchmarks provide useful but limited coverage of broader capabilities and generation settings. 
Biography factuality is additionally assessed using LLM-as-a-judge, which correlates well but imperfectly with human evaluation.

\bibliography{custom}

\appendix

\section{Theoretical Proofs and Analysis}
\label{sec:proofs}

\begin{restatable}{proposition}{robustCal}
\label{prop:robustCal}
Suppose $\epsilon_1(Y),\; \epsilon_0(Y)$ have finite second moments, with mean $\mu_c$, variance $\sigma_c^2$, and covariance $\kappa$. Then,
\begin{equation}
    \mathrm{CE}(\hat{\eta}) \leq \frac{1}{4} \sqrt{\left(\mu_1 - \mu_0 + \delta_\pi\right)^2+\sigma_1^2 + \sigma_0^2 -2\kappa}
\end{equation}
\end{restatable}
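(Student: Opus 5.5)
The plan is to bound $\mathrm{CE}(\hat\eta)$ by the $L^2$ distance between $\hat\eta$ and the true posterior $\eta$. After that, only the Lipschitz property of the sigmoid and a second-moment computation of the logit error are needed.

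\textbf{Step 1 (reduction to $L^2$ error against $\eta$).} Let $f=\hat\eta(Y)$. Since $\eta(Y)=\mathbb{E}[C\mid Y]$ and $f$ is a function of $Y$, the tower property gives
\begin{equation}
\mathbb{E}[C\mid f]=\mathbb{E}[\eta(Y)\mid f].
\end{equation}
Hence
\begin{equation}
\mathbb{E}[C\mid f]-f=\mathbb{E}[\eta(Y)-\hat\eta(Y)\mid f].
\end{equation}
Conditional Jensen (or the fact that conditional expectation is an $L^2$ contraction) then yields
\begin{equation}
\mathrm{CE}(\hat\eta)^2\le \mathbb{E}\bigl[(\eta(Y)-\hat\eta(Y))^2\bigr].
\end{equation}

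\textbf{Step 2 (sigmoid Lipschitz).} Write
\begin{equation}
\eta=\sigma(E^*+\alpha), \qquad \hat\eta=\sigma(\hat E+\hat\alpha).
\end{equation}
Since $\sigma'(x)=\sigma(x)(1-\sigma(x))\le 1/4$, the sigmoid is $1/4$-Lipschitz, so
\begin{equation}
|\eta-\hat\eta|\le \tfrac14\,|\Delta(Y)|.
\end{equation}
Here the logit error, computed as in the main text, is
\begin{equation}
\Delta=(\hat E+\hat\alpha)-(E^*+\alpha)=\epsilon_1-\epsilon_0+\delta_\pi.
\end{equation}
Therefore $\mathrm{CE}(\hat\eta)\le \tfrac14\sqrt{\mathbb{E}[\Delta^2]}$.

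\textbf{Step 3 (second moment).} Because $\delta_\pi$ is a constant, we have
\begin{equation}
\mathbb{E}[\Delta^2]=(\mathbb{E}\Delta)^2+\mathrm{Var}(\Delta).
\end{equation}
Here $\mathbb{E}\Delta=\mu_1-\mu_0+\delta_\pi$ and $\mathrm{Var}(\Delta)=\sigma_1^2+\sigma_0^2-2\kappa$. The finite second moments guarantee that all these quantities exist. Substituting gives the stated bound.

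As a sanity check, \Cref{cor:commonMode} follows immediately: setting $\mu_1=\mu_0$, $\delta_\pi=0$, $\sigma_c=\sigma$ and $\kappa=\rho\sigma^2$ gives
\begin{equation}
\tfrac14\sqrt{2\sigma^2(1-\rho)}=\tfrac{\sigma}{2\sqrt2}\sqrt{1-\rho}.
\end{equation}

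The main point requiring care is Step 1, namely justifying that calibration error is dominated by the posterior estimation error. This needs the measurability of $f$ with respect to $Y$ and the identification $\eta(Y)=\mathbb{E}[C\mid Y]$. It also needs the expectations in Steps 1 and 3 to be taken under the same joint law of $(Y,C)$ that defines $p^*_c$ and $\pi_c$.

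One subtlety is that the moments $\mu_c,\sigma_c,\kappa$ must be taken under this marginal law of $Y$, not under the class conditionals. A second subtlety arises because the scores are length-normalised in practice. I would state that the proposition concerns the idealised $\hat\eta$ as defined, and Step 2 applies verbatim to any logit error.
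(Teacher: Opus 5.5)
Your proposal is correct and follows essentially the same route as the paper's proof. Both use the tower property plus conditional Jensen to get $\mathrm{CE}^2(\hat\eta)\le\mathbb{E}[(\eta-\hat\eta)^2]$, then the $1/4$-Lipschitz bound on the sigmoid to pass to the logit error $\epsilon_1-\epsilon_0+\delta_\pi$, and finally the mean--variance decomposition of its second moment. Your caveats that the moments are taken under the marginal law of $Y$, and that length normalisation falls outside the idealised statement, are sensible points the paper leaves implicit.
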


\begin{proof}

Let $\eta=P(C=1 \mid Y)$ denote the true factual posterior and $\hat{\eta}(y) = \sigma \left( \hat{E}(y) + \hat \alpha \right)$ the contrastive factual posterior. By definition of the surrogate log-errors,

\begin{equation*}
    \epsilon_c(y) = \log q_c(y) - \log p_c^*(y),
\end{equation*}
and therefore $\hat{E} - E^* = \epsilon_1 - \epsilon_0$.

We first relate approximation of the Bayes posterior to calibration. Using the population $L_2$ calibration error, we have
\begin{align*}
    \mathrm{CE}^2(\hat\eta) =&\mathbb{E}\left[ \left( \mathbb{E}\left[C\mid \hat\eta\right] - \hat\eta\right)^2\right] \\
    =& \mathbb E \left[ \left( \mathbb E[\eta\mid \hat\eta]-\hat\eta \right)^2 \right] \\
    =& \mathbb{E}\left[\mathbb{E} \left(\eta - \hat\eta \mid \hat\eta\right)^2\right]
\end{align*}
since, by the tower property of conditional expectation, $\mathbb{E}[C\mid \hat\eta] = \mathbb E \left[\mathbb{E} [C\mid y] \mid \hat\eta\right] = \mathbb E[\eta\mid \hat\eta]$.
Applying conditional Jensen's inequality to the convex $x\mapsto x^2$,

\begin{equation*}
    \mathbb{E} \left(\eta - \hat\eta \mid \hat\eta\right)^2 \leq \mathbb{E} \left[(\eta-\hat\eta)^2\mid \hat\eta\right]
\end{equation*}
Taking expectations and using the tower property again gives

\begin{equation}
\nonumber
\mathrm{CE}^2(\hat\eta) \leq \mathbb{E} \left[ (\eta - \hat\eta)^2\right]
\end{equation}

Next, we relate the surrogate log-errors to calibration. Note that the logistic sigmoid satisfies
\begin{equation*}
    \sigma'(x) = \sigma(x)(1-\sigma(x))\leq 1/4
\end{equation*}
By mean-value theorem, $\sigma$ is $1/4$-Lipschitz. Thus,
\begin{align*}
    |\hat\eta - \eta| &= \left|\sigma \left(\hat E + \hat\alpha \right) - \sigma \left(E^* + \alpha\right) \right| \\
    &\leq \frac{1}{4} \left| \left(\hat{E}-E^*\right) + \left(\hat \alpha - \alpha\right)\right| \\
    &= \frac{1}{4} \left| \left(\epsilon_1 - \epsilon_0\right) + \delta_\pi\right|
\end{align*}
Squaring and taking expectations,
\begin{equation*}
\mathrm{CE}^2 (\hat\eta) \leq \frac{1}{16} \mathbb{E}\left[\left(\epsilon_1 - \epsilon_0 + \delta_\pi\right)^2\right]
\end{equation*}
It remains to evaluate this second moment. Let $X=\epsilon_1-\epsilon_0+\delta_\pi$. Since $\delta_\pi$ is constant,
\begin{equation*}
    \mathbb E (X) = \mu_1 - \mu_0 + \delta_\pi
\end{equation*}
Moreover,
\begin{align*}
\mathbb{V}\text{ar}(X) &= \mathbb{V}\text{ar}(\epsilon_1-\epsilon_0)\\
&= \mathbb{V}\text{ar}(\epsilon_1) + \mathbb{V}\text{ar}(\epsilon_0) - 2\operatorname{Cov}(\epsilon_1,\epsilon_0)\\ &= \sigma_1^2+\sigma_0^2-2\kappa 
\end{align*}
With the identity $\mathbb E(X^2) = \mathbb{V}\text{ar}(X) + \mathbb E(X)^2$, we obtain
\begin{equation}
\mathrm{CE}^2(\hat\eta) \leq \frac{1}{16} \left[(\mu_1-\mu_0+\delta_\pi)^2 + \sigma_1^2 + \sigma_0^2 - 2\kappa\right]
\end{equation}
Finally, taking the non-negative square root gives us the desired inequality.
\end{proof}

\commonMode*
\begin{proof}
Under the matched means and variances assumption, $\mu_1=\mu_0$ and $\sigma_1^2=\sigma_0^2=\sigma^2$, and when the prior is correctly specified, $\delta_\pi=0$.

We can write correlation as $\rho=\operatorname{Cov}(\epsilon_1, \epsilon_0)/\sigma^2$.
Thus, \Cref{prop:robustCal} reduces to

\begin{align*}
    \mathrm{CE}(\hat\eta) &\leq \frac{1}{4}\sqrt{2\sigma^2-2\sigma^2\rho}\\
    &=\frac{\sigma}{2\sqrt{2}}\sqrt{1-\rho}
\end{align*}
\end{proof}

\section{Experimental Details}
\label{sec:expt_details}

\subsection{Language Model Formulation of CFE}
\label{subsec:cfe_formulation}

For each factuality class $c\in\{0,1\}$, we express the condition $z_c$ as a natural-language prompt, where $z_1$ describes a factual generative condition and $z_0$ non-factual (\Cref{fig:prompt_conditions}).
We provide $z_c$ as the model prefix and teacher-force the claim $y$ as its continuation. Here, $q_\theta(\cdot \mid \cdot)$ denotes the LM's next-token predictive distribution.
For a claim $y_{1:L}$ and the $k$-th prompt pair (\Cref{fig:prompt_conditions}), the surrogate conditional log-likelihood is
\begin{align}
    \ell_c^{(k)}(y) &= \log q_\theta(y\mid z_c^{(k)}) \\
    &= \sum_{t=1}^{L} \log q_\theta\left( y_t \mid z_c^{(k)}, y_{<t} \right)
\end{align}
We include the model-specific assistant end-of-turn token (or EOS token for raw-text models) as the final target token. Because summed log-likelihood differences can scale with claim length, we consider the length-normalised (LN) variant. Thus, the raw and LN evidence scores are
\begin{equation}
    \hat{E}_{\mathrm{raw}}^{(k)}(y) = \ell_1^{(k)}(y) - \ell_0^{(k)}(y), \,
    \hat{E}_{\mathrm{LN}}^{(k)}(y) =  \frac{\hat{E}_{\mathrm{raw}}^{(k)}(y)}{L}
\end{equation}
To reduce sensitivity to the particular wording of $z_c$, our default CFE additionally averages over the $K=5$ paraphrased condition pairs listed in \Cref{fig:prompt_conditions}.
\begin{equation}
\label{eq:cfe}
    \hat{E}(y) = \frac{1}{K}\sum_{k=1}^K \hat{E}_{\mathrm{LN}}^{(k)}(y)
\end{equation}
We refer to this averaging as prompt normalisation. In the ablation in \Cref{tab:ablation}, CFE$_\mathrm{raw}$ omits length normalisation, CFE$_{\mathrm{single}}$ uses only the first ($k=1$) condition pair, and the default CFE applies both operations (\Cref{eq:cfe}).

\subsection{Prompt Templates.}

We provide the prompt templates for CFE in \Cref{fig:prompt_conditions} and P(True)/P(True)-norm in \Cref{fig:prompt_ptrue}.
We collectively refer to the prompts in the respective target languages as native prompting.

\begin{figure*}
    \centering
    \includegraphics[width=\linewidth]{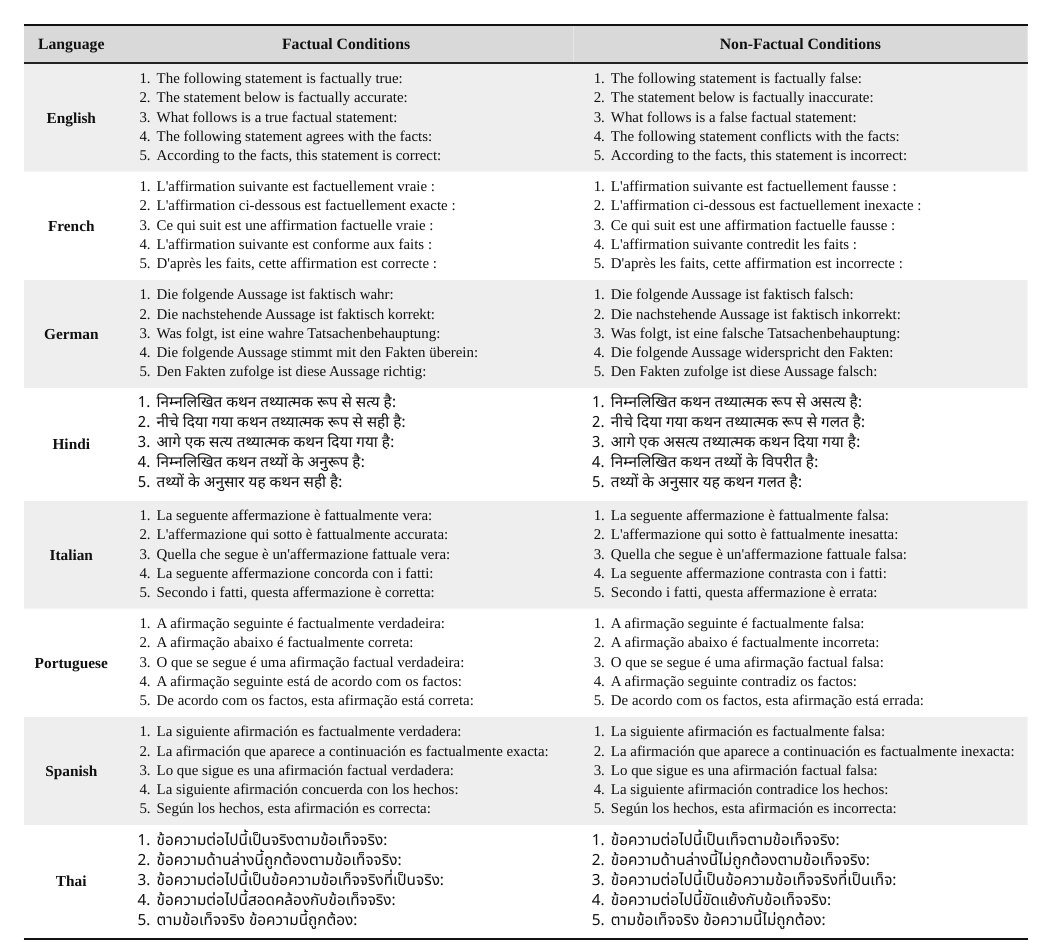}
    \caption{Factual and non-factual generative conditions for CFE. Pair \#1 is the default for CFE$_\text{single}$.}
    \label{fig:prompt_conditions}
\end{figure*}

\begin{figure}
    \centering
    \includegraphics[width=\linewidth]{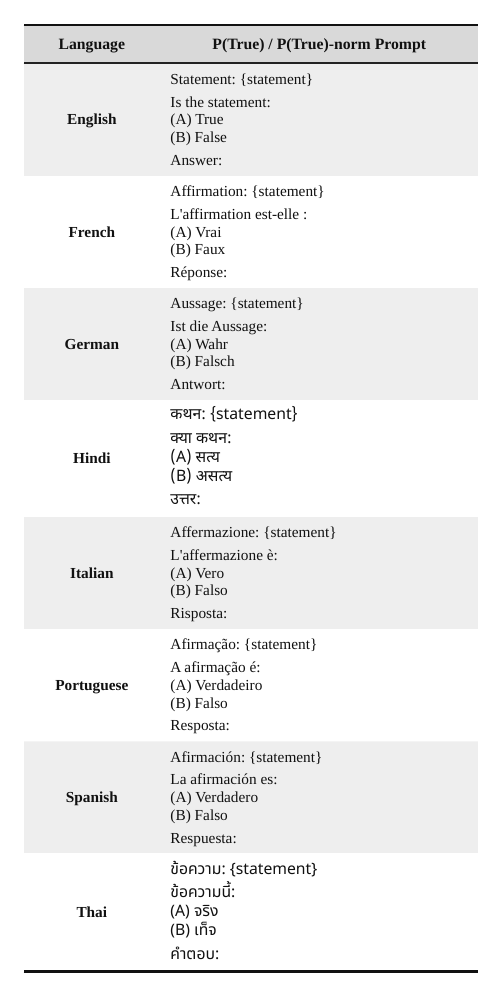}
    \caption{P(True) / P(True)-norm prompt.}
    \label{fig:prompt_ptrue}
\end{figure}

\subsection{Factuality Alignment Setup}
\label{subsec:fact_setup}

\paragraph{Finetuning Data.}
Each target--language True--False dataset contains 6,085 claims aligned one-to-one with their English translations, spanning six topics.
We train an independent adapter for each target-language--topic pair. For a topic containing $N$ paired claims, we split 70/30 for training and testing; we reserve 10\% of the training set for validation. Across six topic splits, this corresponds to 3,830, 428, and 1,827 pairs for training, validation, and testing, with each adapter using between 386 and 918 training pairs.
Although each topic provides only several hundred finetuning pairs, the experiment updates a small LoRA parameter subset. We therefore regard this setting as a controlled analysis for the proposed alignment geometry.

\paragraph{Belebele Evaluation.}
We evaluate general-task retention on the complete Belebele test split, comprising 900 multiple-choice reading-comprehension questions per language. For each adapter, we evaluate both English (\texttt{eng\_Latn}) and its target language (\texttt{deu\_Latn} or \texttt{hin\_Deva}) subsets before and after finetuning.
Following a five-shot setup, preceding every Belebele query, we provide the same five English demonstrations, selected deterministically with seed 42 from RACE \citep{lai-etal-2017-race}.
We use the raw-text ``\texttt{P:/Q:/A--D:/Answer:}'' format for passage, question, and the answer options, without a chat template.
We report accuracy over all 900 questions.

\paragraph{Biography Generation and Evaluation.}
We sample 50 unique people from FActScore \citep{min-etal-2023-factscore}.
We prompt each model in the target language to produce a single-paragraph factual biography with raw-text prompting (``Write a factual biography of \{\texttt{entity}\} in one paragraph.'') and greedy decoding for at most 512 tokens. The frozen base-model generations are reused across comparisons with adapters trained on different topics.
We evaluate generations using \texttt{gpt-5.6-luna} \citep{openai2026gpt56luna} with medium reasoning effort and a two-stage protocol. The first stage decomposes each biography into distinct, atomic, and verifiable claims.
The second stage judges each claim and assigns one of three verdicts: \texttt{correct}, \texttt{incorrect}, or \texttt{uncertain}.
Factual precision is the mean, across biographies containing at least one claim, of the proportion of extracted claims judged correct.
We interpret this measure as LLM-estimated factual precision, which corresponds strongly but imperfectly with human judgment (\Cref{tab:llm_judge}).

\section{Supplementary Experimental Results}
\label{sec:supp_expt}

\paragraph{Held-out Scaling.}
In \Cref{sec:expt}, for non-probabilistic baseline scores not in the $[0,1]$ range, \textit{i.e.} Energy, Entropy, LN-Entropy, NLL, and Perplexity, we apply $\operatorname{min-max}$ linear scaling similar to \citep{vashurin-etal-2025-benchmarking}.
Differently from them, however, we scale using the entire set, whereas \citet{vashurin-etal-2025-benchmarking} scale using minimum and maximum values on a held-out calibration set and apply clipping.
We examine whether estimating the scaling parameters on the complete evaluation set affects our results through a robustness experiment over the English datasets. Using five label-stratified folds, we treat each fold as a 20\% calibration set, estimate the minimum and maximum of each fold, and scale the remaining 80\% with these statistics. We take the mean over the five folds.
\Cref{tab:scaling} presents the ECE results against the main \Cref{tab:main-results-language} values (AUROC remains unchanged).
Overall, held-out scaling does not change our substantive conclusions.
Thus, we use full-set scaling in the main experiments as the dataset is limited in size. This ensures that scaled methods are compared on the same examples as probabilistic methods that require no separate calibration set.

\begin{table}[htbp]
    \centering
    \setlength{\tabcolsep}{15pt}
    \renewcommand{\arraystretch}{1.2}

    \resizebox{.9\linewidth}{!}{%
    \begin{tabular}{lc}
    \toprule
    \textbf{Method} & \textbf{No Held-out $\rightarrow$ Held-out} \\ \midrule
    Energy & .1726 $\rightarrow$ .1735 \\
    Entropy &  .3065 $\rightarrow$ .2842 \\
    LN-Entropy & .1257 $\rightarrow$ .1357 \\
    NLL & .2937 $\rightarrow$ .2666 \\
    Perplexity & .4781 $\rightarrow$ .4689 \\ \midrule
    Average & .2753 $\rightarrow$ .2658\\
    \bottomrule
    \end{tabular}
    }
    \caption{ECE with $\operatorname{min-max}$ scaling using the entire dataset without a held-out calibration set or with one.}
    \label{tab:scaling}
\end{table}

\begin{table}[htbp]
    \centering
    \setlength{\tabcolsep}{3pt}
    \renewcommand{\arraystretch}{1.25}

    \resizebox{\linewidth}{!}{%
    \begin{tabular}{lccccc}
        \toprule
        \textbf{Method} & \textbf{IT}& \textbf{Chat} & \textbf{ECE ($\downarrow$)} & \textbf{BS ($\downarrow$)} & \textbf{AUROC ($\uparrow$)}\\
        \midrule
        \multirow{4}{*}{P(True)} & \cross & \cross & \mse{.2026}{.0050} & \mse{.2816}{.0014} & \mse{.6548}{.0144} \\
        & \cross & \checkmk & \mse{.4530}{.0014} & \mse{.4543}{.0011} & \mse{.6081}{.0158} \\
        & \checkmk & \cross & \mse{.3167}{.0041} & \mse{.3589}{.0023} & \mse{.5760}{.0154} \\
        & \checkmk & \checkmk & \mse{.4987}{.0008} & \mse{.4965}{.0008} & \mse{.5420}{.0163} \\ \midrule
        \multirow{4}{*}{P(True)-norm} & \cross & \cross & \mse{.1824}{.0076} & \mse{.2399}{.0004} & \mse{.7823}{.0113} \\
        & \cross & \checkmk & \mse{.2294}{.0043} & \mse{.2889}{.0019} & \mse{.6501}{.0148} \\
        & \checkmk & \cross & \mse{.1973}{.0076} & \mse{.2462}{.0006} & \mse{.7507}{.0123} \\
        & \checkmk & \checkmk & \mse{.2393}{.0043} & \mse{.3202}{.0011} & \mse{.5497}{.0167} \\ \midrule
        \multirow{4}{*}{Gen-Z} & \cross & \cross & \mse{.3656}{.0118} & \mse{.3656}{.0118} & \mse{.6347}{.0118} \\
        & \cross & \checkmk & \mse{.3984}{.0119} & \mse{.3984}{.0119} & \mse{.6022}{.0119} \\
        & \checkmk & \cross & \mse{.4016}{.0123} & \mse{.4016}{.0123} & \mse{.5993}{.0123} \\
        & \checkmk & \checkmk & \mse{.4214}{.0127} & \mse{.4214}{.0127} & \mse{.5798}{.0127} \\  \midrule
        \multirow{4}{*}{CFP (Ours)} & \cross & \cross & \mse{.1403}{.0100} & \mse{.2405}{.0005} & \mse{.7363}{.0129} \\
        & \cross & \checkmk & \mse{.1058}{.0107} & \mse{.2446}{.0004} & \mse{.6816}{.0150} \\
        & \checkmk & \cross & \mse{.1169}{.0107} & \mse{.2402}{.0009} & \mse{.6781}{.0146} \\
        & \checkmk & \checkmk & \mse{.0877}{.0103} & \mse{.2409}{.0009} & \mse{.6518}{.0156} \\
        \bottomrule
    \end{tabular}%
    }
    \caption{Results with different model types and prompt templates: ``IT'' \cross \ and \checkmk \ refer to base model and instruction-tuned model respectively, and ``Chat'' indicates usage of chat template.}
    \label{tab:template_posttrain}
\end{table}

\paragraph{English versus Native Prompting.}
\Cref{tab:native} examines whether expressing the confidence prompts in the target language affects performance on non-English datasets. The effects are method-dependent. For CFP, native prompting modestly improves discrimination for both base and IT models, while slightly increasing ECE. P(True)-norm instead shows small decreases in AUROC, with mixed effects on calibration.
Overall, prompt language can shift calibration and discrimination differently.

\begin{table}[htbp]
    \centering
    \setlength{\tabcolsep}{3pt}
    \renewcommand{\arraystretch}{1.25}

    \resizebox{\linewidth}{!}{%
    \begin{tabular}{lccccc}
        \toprule
        \textbf{Method} & \textbf{IT}& \textbf{Native} & \textbf{ECE ($\downarrow$)} & \textbf{BS ($\downarrow$)} & \textbf{AUROC ($\uparrow$)}\\
        \midrule
        \multirow{4}{*}{P(True)} & \cross & \cross & \mse{.1991}{.0042} & \mse{.2866}{.0013} & \mse{.6399}{.0148} \\
        & \cross & \checkmk & \mse{.1974}{.0056} & \mse{.2734}{.0015} & \mse{.6600}{.0142} \\
        & \checkmk & \cross & \mse{.3326}{.0038} & \mse{.3664}{.0025} & \mse{.5789}{.0156} \\
        & \checkmk & \checkmk & \mse{.3006}{.0041} & \mse{.3522}{.0119} & \mse{.5614}{.0154} \\ \midrule
        \multirow{4}{*}{P(True)-norm} & \cross & \cross & \mse{.1745}{.0077} & \mse{.2411}{.0003} & \mse{.7806}{.0113} \\
        & \cross & \checkmk & \mse{.1769}{.0076} & \mse{.2296}{.0004} & \mse{.7631}{.0019} \\
        & \checkmk & \cross & \mse{.1989}{.0076} & \mse{.2497}{.0006} & \mse{.7458}{.0125} \\
        & \checkmk & \checkmk & \mse{.1803}{.0078} & \mse{.2416}{.0005} & \mse{.7399}{.0127} \\ \midrule
        \multirow{4}{*}{Gen-Z} & \cross & \cross & \mse{.3829}{.0119} & \mse{.3829}{.0119} & \mse{.6174}{.0119} \\
        & \cross & \checkmk & \mse{.3632}{.0119} & \mse{.3632}{.0119} & \mse{.6370}{.0118} \\
        & \checkmk & \cross & \mse{.4154}{.0125} & \mse{.4154}{.0125} & \mse{.5862}{.0125} \\
        & \checkmk & \checkmk & \mse{.4047}{.0128} & \mse{.4047}{.0128} & \mse{.5953}{.0127} \\  \midrule
        \multirow{4}{*}{CFP (Ours)} & \cross & \cross & \mse{.1262}{.0103} & \mse{.2414}{.0006} & \mse{.7148}{.0138} \\
        & \cross & \checkmk & \mse{.1450}{.0101} & \mse{.2421}{.0004} & \mse{.7325}{.0129} \\
        & \checkmk & \cross & \mse{.1031}{.0108} & \mse{.2418}{.0009} & \mse{.6569}{.0156} \\
        & \checkmk & \checkmk & \mse{.1131}{.0107} & \mse{.2419}{.0008} & \mse{.6702}{.0146} \\
        \bottomrule
    \end{tabular}%
    }
    \caption{Results with different model types and English versus native prompting in the target language on non-English datasets.}
    \label{tab:native}
\end{table}

\paragraph{Performance Across Model Scale.}
\Cref{fig:model_size} disaggregates the overall results in \Cref{tab:main-results-language} by model family and size.
CFP achieves the lowest ECE at every evaluated scale, indicating that its aggregate calibration advantage is not driven by a single model family or size. Its AUROC generally improves with scale for Qwen-3 and Qwen-3.5 and remains competitive with P(True)-norm, although the latter is stronger at some individual scales.
The trend is not universal: Gemma-4 is non-monotonic, with E4B model outperforming 12B in AUROC.
Overall, model scale often improves factual discrimination, but does not guarantee it, whereas CFP's calibration advantage is comparatively consistent across scales; although, this must be interpreted cautiously as AUROC may fall close to chance despite low ECE.

\begin{figure*}
    \centering
    \includegraphics[width=\linewidth]{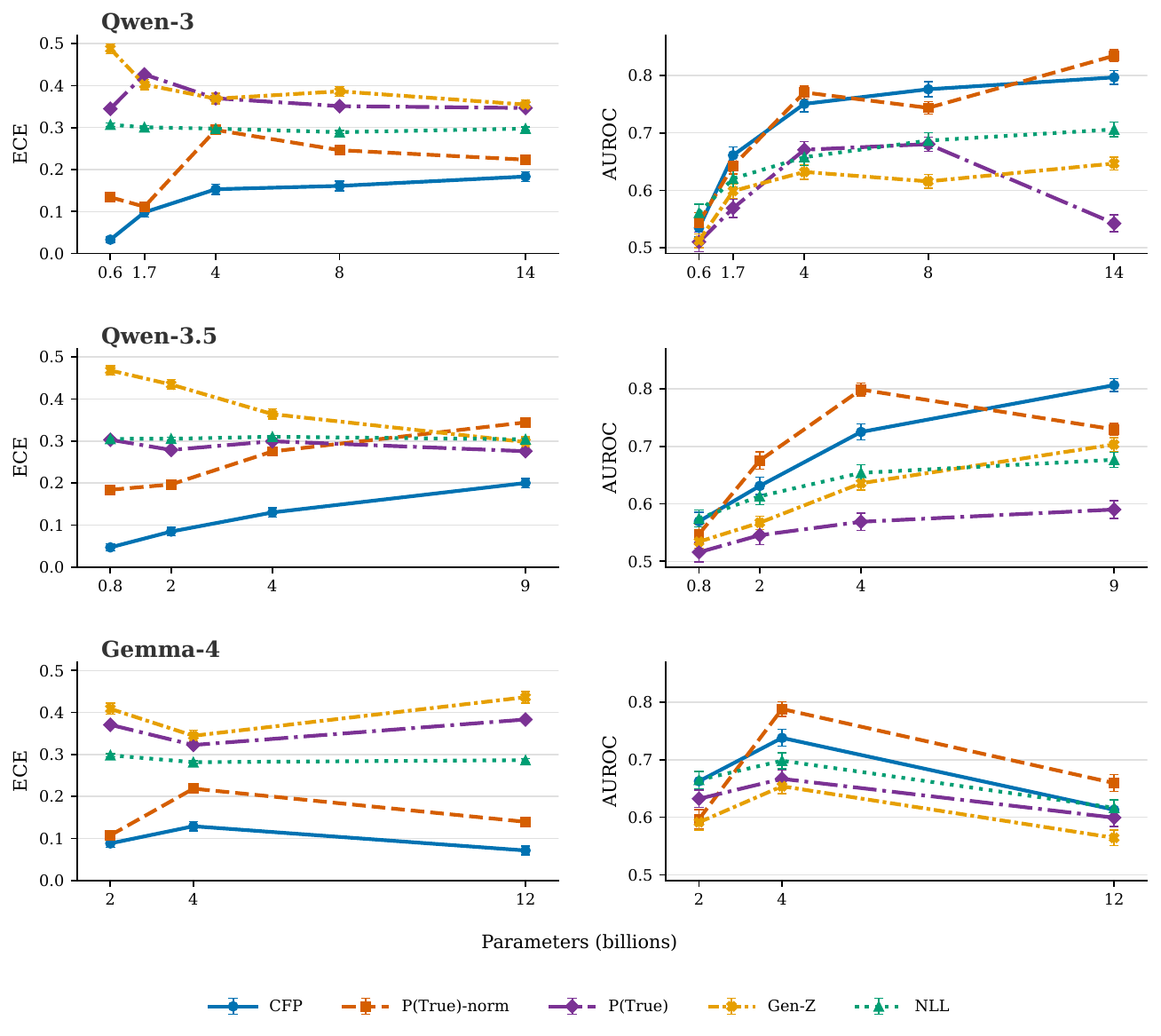}
    \caption{Calibration and  discrimination across model families and scales. Columns report ECE ($\downarrow$) and AUROC ($\uparrow)$.
    Each point is averaged over eight languages, base and IT model variants, and model/prompt-format settings (including English and native prompting but excluding base models with chat template).
    }
    \label{fig:model_size}
\end{figure*}

\paragraph{LLM-Judge Human Validation.}
To validate the \texttt{gpt-5.6-luna} factual labels in the biography generation task, we randomly select eight biographies each from English, German, and Hindi generations and ask two human annotators, with web access, to label the correctness of each claim.
Since we rely on internal authors as annotators due to budget constraints and they do not speak German and Hinid, we translate these claims into English using \texttt{gpt-5.6-sol}.
\Cref{tab:llm_judge} presents the validation results. As we have two sets of human annotations, we also compare agreement between the two annotators. Even between humans, there is no complete agreement on factuality labels.
The LLM-generated labels show good, although imperfect, agreement with human labels.
This validation serves as small-scale evidence of alignment with human judgment.

\begin{table}[htbp]
    \centering
    \setlength{\tabcolsep}{3pt}
    \renewcommand{\arraystretch}{1.25}

    \resizebox{\linewidth}{!}{%
    \begin{tabular}{lccc}
    \toprule
    \textbf{Language} & \textbf{\# Claims} & \textbf{Human---Human} & \textbf{LLM-Human} \\ \midrule
    English & 109 & 88.1\% (0.734) & 75.7\% (0.527) \\
    German & 71 & 88.7\% (0.761) & 85.9\% (0.719) \\
    Hindi & 71 & 100.0\% (1.000) & 83.1\% (0.634) \\
    \bottomrule
    \end{tabular}
    }
    \caption{Alignment between human and LLM judgment: agreement \% (Cohen's $\kappa$).}
    \label{tab:llm_judge}
\end{table}

\begin{table*}[htbp]
    \centering
    \setlength{\tabcolsep}{8pt}
    \renewcommand{\arraystretch}{1.15}
    \resizebox{\linewidth}{!}{%
    \begin{tabular}{llcccccc}
        \toprule
        \multirow{2}{*}{\textbf{Training Topic}} & \multirow{2}{*}{\textbf{Objective}} & \multicolumn{3}{c}{\textbf{German}} & \multicolumn{3}{c}{\textbf{Hindi}} \\
        \cmidrule(lr){3-5} \cmidrule(lr){6-8}
        & & \textbf{True--False} & \textbf{BB Target} & \textbf{BB English} & \textbf{True--False} & \textbf{BB Target} & \textbf{BB English} \\
        \midrule
        \multirow{3}{*}{Animals} & Base & 63.8 & 81.9 & 86.9 & 50.0 & 42.8 & 86.9 \\
         & Full & 56.9 \footnotesize \textcolor{darkred}{$(-6.9)$} & 81.9 {\footnotesize $(0.0)$} & 86.6 \footnotesize \textcolor{darkred}{$(-0.3)$} & 43.9 \footnotesize \textcolor{darkred}{$(-6.0)$} & 40.7 \footnotesize \textcolor{darkred}{$(-2.1)$} & 87.0 \footnotesize \textcolor{darkgreen}{$(+0.1)$} \\
         & CFE & 70.7 \footnotesize \textcolor{darkgreen}{$(+6.8)$} & 81.9 {\footnotesize $(0.0)$} & 87.1 \footnotesize \textcolor{darkgreen}{$(+0.2)$} & 59.9 \footnotesize \textcolor{darkgreen}{$(+9.9)$} & 40.7 \footnotesize \textcolor{darkred}{$(-2.1)$} & 86.7 \footnotesize \textcolor{darkred}{$(-0.2)$} \\
        \midrule
        \multirow{3}{*}{Chemical Elements} & Base & 81.1 & 81.9 & 86.9 & 54.5 & 42.8 & 86.9 \\
         & Full & 78.4 \footnotesize \textcolor{darkred}{$(-2.7)$} & 81.3 \footnotesize \textcolor{darkred}{$(-0.6)$} & 86.7 \footnotesize \textcolor{darkred}{$(-0.2)$} & 54.5 \footnotesize \textcolor{darkred}{$(-0.1)$} & 40.2 \footnotesize \textcolor{darkred}{$(-2.6)$} & 86.6 \footnotesize \textcolor{darkred}{$(-0.3)$} \\
         & CFE & 80.6 \footnotesize \textcolor{darkred}{$(-0.5)$} & 82.1 \footnotesize \textcolor{darkgreen}{$(+0.2)$} & 87.0 \footnotesize \textcolor{darkgreen}{$(+0.1)$} & 68.8 \footnotesize \textcolor{darkgreen}{$(+14.3)$} & 40.9 \footnotesize \textcolor{darkred}{$(-1.9)$} & 87.0 \footnotesize \textcolor{darkgreen}{$(+0.1)$} \\
        \midrule
        \multirow{3}{*}{Cities} & Base & 84.1 & 81.9 & 86.9 & 51.3 & 42.8 & 86.9 \\
         & Full & 60.9 \footnotesize \textcolor{darkred}{$(-23.2)$} & 81.2 \footnotesize \textcolor{darkred}{$(-0.7)$} & 87.2 \footnotesize \textcolor{darkgreen}{$(+0.3)$} & 48.9 \footnotesize \textcolor{darkred}{$(-2.4)$} & 38.1 \footnotesize \textcolor{darkred}{$(-4.7)$} & 86.7 \footnotesize \textcolor{darkred}{$(-0.2)$} \\
         & CFE & 87.8 \footnotesize \textcolor{darkgreen}{$(+3.7)$} & 81.8 \footnotesize \textcolor{darkred}{$(-0.1)$} & 87.4 \footnotesize \textcolor{darkgreen}{$(+0.6)$} & 51.1 \footnotesize \textcolor{darkred}{$(-0.2)$} & 42.9 \footnotesize \textcolor{darkgreen}{$(+0.1)$} & 87.0 \footnotesize \textcolor{darkgreen}{$(+0.1)$} \\
        \midrule
        \multirow{3}{*}{Companies} & Base & 74.7 & 81.9 & 86.9 & 54.7 & 42.8 & 86.9 \\
         & Full & 68.0 \footnotesize \textcolor{darkred}{$(-6.7)$} & 81.9 {\footnotesize $(0.0)$} & 87.3 \footnotesize \textcolor{darkgreen}{$(+0.4)$} & 51.8 \footnotesize \textcolor{darkred}{$(-2.8)$} & 39.1 \footnotesize \textcolor{darkred}{$(-3.7)$} & 87.4 \footnotesize \textcolor{darkgreen}{$(+0.6)$} \\
         & CFE & 84.4 \footnotesize \textcolor{darkgreen}{$(+9.7)$} & 81.9 {\footnotesize $(0.0)$} & 86.9 {\footnotesize $(0.0)$} & 56.2 \footnotesize \textcolor{darkgreen}{$(+1.5)$} & 41.4 \footnotesize \textcolor{darkred}{$(-1.3)$} & 86.4 \footnotesize \textcolor{darkred}{$(-0.4)$} \\
        \midrule
        \multirow{3}{*}{Inventions} & Base & 74.3 & 81.9 & 86.9 & 50.9 & 42.8 & 86.9 \\
         & Full & 71.1 \footnotesize \textcolor{darkred}{$(-3.2)$} & 81.9 {\footnotesize $(0.0)$} & 86.8 \footnotesize \textcolor{darkred}{$(-0.1)$} & 50.8 {\footnotesize $(0.0)$} & 42.6 \footnotesize \textcolor{darkred}{$(-0.2)$} & 86.7 \footnotesize \textcolor{darkred}{$(-0.2)$} \\
         & CFE & 78.4 \footnotesize \textcolor{darkgreen}{$(+4.1)$} & 82.1 \footnotesize \textcolor{darkgreen}{$(+0.2)$} & 86.8 \footnotesize \textcolor{darkred}{$(-0.1)$} & 51.0 \footnotesize \textcolor{darkgreen}{$(+0.1)$} & 41.8 \footnotesize \textcolor{darkred}{$(-1.0)$} & 86.9 {\footnotesize $(0.0)$} \\
        \midrule
        \multirow{3}{*}{Scientific Facts} & Base & 86.4 & 81.9 & 86.9 & 46.9 & 42.8 & 86.9 \\
         & Full & 84.0 \footnotesize \textcolor{darkred}{$(-2.4)$} & 82.1 \footnotesize \textcolor{darkgreen}{$(+0.2)$} & 86.8 \footnotesize \textcolor{darkred}{$(-0.1)$} & 53.7 \footnotesize \textcolor{darkgreen}{$(+6.9)$} & 40.7 \footnotesize \textcolor{darkred}{$(-2.1)$} & 86.9 {\footnotesize $(0.0)$} \\
         & CFE & 90.9 \footnotesize \textcolor{darkgreen}{$(+4.6)$} & 82.1 \footnotesize \textcolor{darkgreen}{$(+0.2)$} & 86.9 {\footnotesize $(0.0)$} & 55.4 \footnotesize \textcolor{darkgreen}{$(+8.5)$} & 42.3 \footnotesize \textcolor{darkred}{$(-0.4)$} & 86.7 \footnotesize \textcolor{darkred}{$(-0.2)$} \\
        \bottomrule
    \end{tabular}%
    }
    \caption{Per-topic alignment results for Qwen-3.5-2B. Each topic denotes the True--False training split used to fit a separate adapter. True--False reports held-out AUROC, and Belebele (BB) reports accuracy.}
    \label{tab:align_breakdown2b}
\end{table*}

\begin{table*}[htbp]
    \centering
    \setlength{\tabcolsep}{8pt}
    \renewcommand{\arraystretch}{1.15}
    \resizebox{\linewidth}{!}{%
    \begin{tabular}{llcccccc}
        \toprule
        \multirow{2}{*}{\textbf{Training Topic}} & \multirow{2}{*}{\textbf{Objective}} & \multicolumn{3}{c}{\textbf{German}} & \multicolumn{3}{c}{\textbf{Hindi}} \\
        \cmidrule(lr){3-5} \cmidrule(lr){6-8}
        & & \textbf{True--False} & \textbf{BB Target} & \textbf{BB English} & \textbf{True--False} & \textbf{BB Target} & \textbf{BB English} \\
        \midrule
        \multirow{3}{*}{Animals} & Base & 71.7 & 92.4 & 92.9 & 45.4 & 65.3 & 92.9 \\
         & Full & 46.6 \footnotesize \textcolor{darkred}{$(-25.2)$} & 91.9 \footnotesize \textcolor{darkred}{$(-0.6)$} & 92.9 {\footnotesize $(0.0)$} & 50.6 \footnotesize \textcolor{darkgreen}{$(+5.3)$} & 66.0 \footnotesize \textcolor{darkgreen}{$(+0.7)$} & 93.2 \footnotesize \textcolor{darkgreen}{$(+0.3)$} \\
         & CFE & 70.0 \footnotesize \textcolor{darkred}{$(-1.8)$} & 92.2 \footnotesize \textcolor{darkred}{$(-0.2)$} & 93.2 \footnotesize \textcolor{darkgreen}{$(+0.3)$} & 61.2 \footnotesize \textcolor{darkgreen}{$(+15.9)$} & 65.8 \footnotesize \textcolor{darkgreen}{$(+0.4)$} & 93.0 \footnotesize \textcolor{darkgreen}{$(+0.1)$} \\
        \midrule
        \multirow{3}{*}{Chemical Elements} & Base & 93.5 & 92.4 & 92.9 & 60.5 & 65.3 & 92.9 \\
         & Full & 91.5 \footnotesize \textcolor{darkred}{$(-2.0)$} & 92.0 \footnotesize \textcolor{darkred}{$(-0.4)$} & 93.1 \footnotesize \textcolor{darkgreen}{$(+0.2)$} & 63.6 \footnotesize \textcolor{darkgreen}{$(+3.1)$} & 64.3 \footnotesize \textcolor{darkred}{$(-1.0)$} & 93.0 \footnotesize \textcolor{darkgreen}{$(+0.1)$} \\
         & CFE & 93.4 \footnotesize \textcolor{darkred}{$(-0.1)$} & 92.4 {\footnotesize $(0.0)$} & 93.0 \footnotesize \textcolor{darkgreen}{$(+0.1)$} & 85.1 \footnotesize \textcolor{darkgreen}{$(+24.6)$} & 65.8 \footnotesize \textcolor{darkgreen}{$(+0.4)$} & 93.0 \footnotesize \textcolor{darkgreen}{$(+0.1)$} \\
        \midrule
        \multirow{3}{*}{Cities} & Base & 87.3 & 92.4 & 92.9 & 62.6 & 65.3 & 92.9 \\
         & Full & 77.9 \footnotesize \textcolor{darkred}{$(-9.4)$} & 92.2 \footnotesize \textcolor{darkred}{$(-0.2)$} & 92.9 {\footnotesize $(0.0)$} & 55.1 \footnotesize \textcolor{darkred}{$(-7.5)$} & 63.8 \footnotesize \textcolor{darkred}{$(-1.6)$} & 93.0 \footnotesize \textcolor{darkgreen}{$(+0.1)$} \\
         & CFE & 84.4 \footnotesize \textcolor{darkred}{$(-2.9)$} & 92.4 {\footnotesize $(0.0)$} & 93.1 \footnotesize \textcolor{darkgreen}{$(+0.2)$} & 65.6 \footnotesize \textcolor{darkgreen}{$(+2.9)$} & 65.1 \footnotesize \textcolor{darkred}{$(-0.2)$} & 93.0 \footnotesize \textcolor{darkgreen}{$(+0.1)$} \\
        \midrule
        \multirow{3}{*}{Companies} & Base & 80.8 & 92.4 & 92.9 & 57.7 & 65.3 & 92.9 \\
         & Full & 68.5 \footnotesize \textcolor{darkred}{$(-12.4)$} & 91.4 \footnotesize \textcolor{darkred}{$(-1.0)$} & 93.1 \footnotesize \textcolor{darkgreen}{$(+0.2)$} & 50.2 \footnotesize \textcolor{darkred}{$(-7.6)$} & 65.1 \footnotesize \textcolor{darkred}{$(-0.2)$} & 92.9 {\footnotesize $(0.0)$} \\
         & CFE & 87.4 \footnotesize \textcolor{darkgreen}{$(+6.6)$} & 92.4 {\footnotesize $(0.0)$} & 93.0 \footnotesize \textcolor{darkgreen}{$(+0.1)$} & 65.3 \footnotesize \textcolor{darkgreen}{$(+7.6)$} & 66.6 \footnotesize \textcolor{darkgreen}{$(+1.2)$} & 92.9 {\footnotesize $(0.0)$} \\
        \midrule
        \multirow{3}{*}{Inventions} & Base & 81.5 & 92.4 & 92.9 & 52.9 & 65.3 & 92.9 \\
         & Full & 83.5 \footnotesize \textcolor{darkgreen}{$(+2.0)$} & 92.4 {\footnotesize $(0.0)$} & 93.1 \footnotesize \textcolor{darkgreen}{$(+0.2)$} & 54.5 \footnotesize \textcolor{darkgreen}{$(+1.6)$} & 64.6 \footnotesize \textcolor{darkred}{$(-0.8)$} & 93.0 \footnotesize \textcolor{darkgreen}{$(+0.1)$} \\
         & CFE & 92.9 \footnotesize \textcolor{darkgreen}{$(+11.4)$} & 92.6 \footnotesize \textcolor{darkgreen}{$(+0.1)$} & 92.9 {\footnotesize $(0.0)$} & 65.2 \footnotesize \textcolor{darkgreen}{$(+12.3)$} & 66.1 \footnotesize \textcolor{darkgreen}{$(+0.8)$} & 93.0 \footnotesize \textcolor{darkgreen}{$(+0.1)$} \\
        \midrule
        \multirow{3}{*}{Scientific Facts} & Base & 94.9 & 92.4 & 92.9 & 62.7 & 65.3 & 92.9 \\
         & Full & 87.7 \footnotesize \textcolor{darkred}{$(-7.2)$} & 92.7 \footnotesize \textcolor{darkgreen}{$(+0.2)$} & 93.0 \footnotesize \textcolor{darkgreen}{$(+0.1)$} & 68.1 \footnotesize \textcolor{darkgreen}{$(+5.4)$} & 64.7 \footnotesize \textcolor{darkred}{$(-0.7)$} & 93.0 \footnotesize \textcolor{darkgreen}{$(+0.1)$} \\
         & CFE & 96.1 \footnotesize \textcolor{darkgreen}{$(+1.2)$} & 92.4 {\footnotesize $(0.0)$} & 93.0 \footnotesize \textcolor{darkgreen}{$(+0.1)$} & 80.5 \footnotesize \textcolor{darkgreen}{$(+17.8)$} & 66.2 \footnotesize \textcolor{darkgreen}{$(+0.9)$} & 93.0 \footnotesize \textcolor{darkgreen}{$(+0.1)$} \\
        \bottomrule
    \end{tabular}%
    }
    \caption{Per-topic alignment results for Qwen-3.5-4B. Each topic denotes the True--False training split used to fit a separate adapter. True--False reports held-out AUROC, and Belebele (BB) reports accuracy.}
    \label{tab:align_breakdown4b}
\end{table*}

\begin{table*}[htbp]
    \centering
    \setlength{\tabcolsep}{4pt}
    \renewcommand{\arraystretch}{1.5}
    \resizebox{\linewidth}{!}{%
    \begin{tabular}{llcccccc}
        \toprule
        \multirow{2}{*}{\textbf{Training Topic}} & \multirow{2}{*}{\textbf{Objective}} & \multicolumn{3}{c}{\textbf{English Generation}} & \multicolumn{3}{c}{\textbf{Hindi Generation}} \\
        \cmidrule(lr){3-5} \cmidrule(lr){6-8}
        & & \textbf{Precision (\%)} & \textbf{\# Claims} & \textbf{\# Correct} & \textbf{Precision (\%)} & \textbf{\# Claims} & \textbf{\# Correct} \\
        \midrule
        \multirow{3}{*}{Animals} & Base & 41.29 & 12.74 & 5.26 & 27.47 & 11.72 & 3.22 \\
        & Full & 40.24 \footnotesize \textcolor{darkred}{$(-1.05)$} & 11.68 \footnotesize \textcolor{darkred}{$(-1.06)$} & 4.70 \footnotesize \textcolor{darkred}{$(-0.56)$} & 33.57 \footnotesize \textcolor{darkgreen}{$(+6.10)$} & 8.34 \footnotesize \textcolor{darkred}{$(-3.38)$} & 2.80 \footnotesize \textcolor{darkred}{$(-0.42)$} \\
         & CFE & 42.17 \footnotesize \textcolor{darkgreen}{$(+0.89)$} & 12.14 \footnotesize \textcolor{darkred}{$(-0.60)$} & 5.12 \footnotesize \textcolor{darkred}{$(-0.14)$} & 37.04 \footnotesize \textcolor{darkgreen}{$(+9.57)$} & 13.12 \footnotesize \textcolor{darkgreen}{$(+1.40)$} & 4.86 \footnotesize \textcolor{darkgreen}{$(+1.64)$} \\
        \midrule
        \multirow{2}{*}{Chemical Elements} & Full & 39.46 \footnotesize \textcolor{darkred}{$(-1.83)$} & 12.52 \footnotesize \textcolor{darkred}{$(-0.22)$} & 4.94 \footnotesize \textcolor{darkred}{$(-0.32)$} & 29.96 \footnotesize \textcolor{darkgreen}{$(+2.48)$} & 9.08 \footnotesize \textcolor{darkred}{$(-2.64)$} & 2.72 \footnotesize \textcolor{darkred}{$(-0.50)$} \\
         & CFE & 44.98 \footnotesize \textcolor{darkgreen}{$(+3.70)$} & 11.56 \footnotesize \textcolor{darkred}{$(-1.18)$} & 5.20 \footnotesize \textcolor{darkred}{$(-0.06)$} & 34.12 \footnotesize \textcolor{darkgreen}{$(+6.64)$} & 11.08 \footnotesize \textcolor{darkred}{$(-0.64)$} & 3.78 \footnotesize \textcolor{darkgreen}{$(+0.56)$} \\
        \midrule
        \multirow{2}{*}{Cities} & Full & 37.77 \footnotesize \textcolor{darkred}{$(-3.52)$} & 12.18 \footnotesize \textcolor{darkred}{$(-0.56)$} & 4.60 \footnotesize \textcolor{darkred}{$(-0.66)$} & 33.73 \footnotesize \textcolor{darkgreen}{$(+6.25)$} & 3.38 \footnotesize \textcolor{darkred}{$(-8.34)$} & 1.14 \footnotesize \textcolor{darkred}{$(-2.08)$} \\
         & CFE & 40.73 \footnotesize \textcolor{darkred}{$(-0.56)$} & 11.54 \footnotesize \textcolor{darkred}{$(-1.20)$} & 4.70 \footnotesize \textcolor{darkred}{$(-0.56)$} & 34.78 \footnotesize \textcolor{darkgreen}{$(+7.31)$} & 8.74 \footnotesize \textcolor{darkred}{$(-2.98)$} & 3.04 \footnotesize \textcolor{darkred}{$(-0.18)$} \\
        \midrule
        \multirow{2}{*}{Companies} & Full & 39.24 \footnotesize \textcolor{darkred}{$(-2.05)$} & 12.64 \footnotesize \textcolor{darkred}{$(-0.10)$} & 4.96 \footnotesize \textcolor{darkred}{$(-0.30)$} & 33.55 \footnotesize \textcolor{darkgreen}{$(+6.08)$} & 12.16 \footnotesize \textcolor{darkgreen}{$(+0.44)$} & 4.08 \footnotesize \textcolor{darkgreen}{$(+0.86)$} \\
         & CFE & 43.76 \footnotesize \textcolor{darkgreen}{$(+2.47)$} & 11.70 \footnotesize \textcolor{darkred}{$(-1.04)$} & 5.12 \footnotesize \textcolor{darkred}{$(-0.14)$} & 29.52 \footnotesize \textcolor{darkgreen}{$(+2.05)$} & 10.84 \footnotesize \textcolor{darkred}{$(-0.88)$} & 3.20 \footnotesize \textcolor{darkred}{$(-0.02)$} \\
        \midrule
        \multirow{2}{*}{Inventions} & Full & 40.87 \footnotesize \textcolor{darkred}{$(-0.42)$} & 14.78 \footnotesize \textcolor{darkgreen}{$(+2.04)$} & 6.04 \footnotesize \textcolor{darkgreen}{$(+0.78)$} & 35.33 \footnotesize \textcolor{darkgreen}{$(+7.86)$} & 8.66 \footnotesize \textcolor{darkred}{$(-3.06)$} & 3.06 \footnotesize \textcolor{darkred}{$(-0.16)$} \\
         & CFE & 42.12 \footnotesize \textcolor{darkgreen}{$(+0.84)$} & 11.68 \footnotesize \textcolor{darkred}{$(-1.06)$} & 4.92 \footnotesize \textcolor{darkred}{$(-0.34)$} & 30.98 \footnotesize \textcolor{darkgreen}{$(+3.51)$} & 10.20 \footnotesize \textcolor{darkred}{$(-1.52)$} & 3.16 \footnotesize \textcolor{darkred}{$(-0.06)$} \\
        \midrule
        \multirow{2}{*}{Scientific Facts} & Full & 35.92 \footnotesize \textcolor{darkred}{$(-5.37)$} & 13.42 \footnotesize \textcolor{darkgreen}{$(+0.68)$} & 4.82 \footnotesize \textcolor{darkred}{$(-0.44)$} & 28.64 \footnotesize \textcolor{darkgreen}{$(+1.17)$} & 8.10 \footnotesize \textcolor{darkred}{$(-3.62)$} & 2.32 \footnotesize \textcolor{darkred}{$(-0.90)$} \\
         & CFE & 40.11 \footnotesize \textcolor{darkred}{$(-1.18)$} & 11.42 \footnotesize \textcolor{darkred}{$(-1.32)$} & 4.58 \footnotesize \textcolor{darkred}{$(-0.68)$} & 30.33 \footnotesize \textcolor{darkgreen}{$(+2.85)$} & 11.54 \footnotesize \textcolor{darkred}{$(-0.18)$} & 3.50 \footnotesize \textcolor{darkgreen}{$(+0.28)$} \\
        \bottomrule
    \end{tabular}%
    }
    \caption{Per-topic biography-generation results for Qwen-3.5-4B finetuned on Hindi True--False statements. Each topic denotes the training split used to fit a separate adapter. Factual precision is pooled across biographies containing at least one claim; \# Claims and \# Correct are per-entity means.}
    \label{tab:align_bio_breakdown}
\end{table*}

\end{document}